\documentclass[pdflatex,sn-nature]{sn-jnl}

\usepackage{graphicx}
\usepackage{amsmath,amssymb,amsfonts}
\usepackage{amsthm}
\usepackage{algorithm}
\usepackage{algorithmic}
\usepackage{array}
\usepackage{booktabs}
\usepackage{multirow}
\usepackage{placeins}
\usepackage{url}
\usepackage{xcolor}
\usepackage[nameinlink,noabbrev]{cleveref}
\hypersetup{hypertexnames=false}

\newif\ifwideproof
\wideprooftrue
\ifwideproof
\fi

\newtheorem{theorem}{Theorem}
\newtheorem{proposition}{Proposition}
\newtheorem{lemma}{Lemma}
\newtheorem{definition}{Definition}
\newtheorem{assumption}{Assumption}
\newtheorem{corollary}{Corollary}

\newcommand{\R}{\mathbb{R}}
\newcommand{\E}{\mathbb{E}}
\newcommand{\cD}{\mathcal{D}}
\newcommand{\cH}{\mathcal{H}}

\newcommand{\cR}{\mathcal{R}}
\newcommand{\cS}{\mathcal{S}}

\begin{document}

\title[Solution-space heterogeneity in federated PDE learning]{Solution-space heterogeneity shapes federated learning dynamics across partial differential equations}

\author[1]{\fnm{Ping} \sur{Luo}}\email{luoping@nudt.edu.cn}
\author[1]{\fnm{Jiahuan} \sur{Wang}}\email{wangjiahuan@nudt.edu.cn}
\author[1]{\fnm{Ziqing} \sur{Wen}}\email{zqwen@nudt.edu.cn}

\author*[1]{\fnm{Tao} \sur{Sun}}\email{suntao.saltfish@outlook.com}
\equalcont{These authors contributed equally to this work.}

\author*[1]{\fnm{Dongsheng} \sur{Li}}\email{dsli@nudt.edu.cn}
\equalcont{These authors contributed equally to this work.}

\affil[1]{\orgdiv{PDL Lab}, \orgname{College of Computer Science and Technology}, \orgaddress{\city{Changsha}, \postcode{410073}, \country{China}}}

\abstract{Federated scientific machine learning enables institutions to train neural surrogates without centralizing local physical data, yet studies of partial differential equations (PDEs) lack a transferable definition of non-independent and identically distributed data. Existing protocols partition coordinates, coefficients, boundary conditions, or geometries according to equation-specific rules. Here, we introduce solution-space PDE-Dirichlet, a protocol that converts continuous supervised responses into reusable solution bins and quantifies the realized separation between clients through optimal transport over the geometry of these bins. We derive an exact inverse relation between population allocation heterogeneity and the Dirichlet concentration, and we establish conditions under which response heterogeneity induces gradient disagreement, local-update dispersion, and parameter divergence. Across seven controlled and public PDE tasks, three neural-operator families, and five random seeds, a lower concentration consistently increases the realized solution distance and optimization heterogeneity. The degradation in final error is task dependent: the largest effect occurs for low-viscosity Burgers, reaching 4.157 percentage points under the most heterogeneous setting, whereas additional communication or smoother dynamics can reduce the final gap despite persistent parameter separation. These results distinguish a reproducible geometric mechanism from task-dependent generalization outcomes and provide a common basis for evaluating non-IID federated PDE learning.}

\keywords{federated learning, scientific machine learning, partial differential equations, neural operators, statistical heterogeneity, optimal transport}

\maketitle

\section{Introduction}
\label{sec:introduction}

Partial differential equations are central to the mathematical description of transport, diffusion, fluid flow, wave propagation, and many other physical processes. High-fidelity numerical solvers remain indispensable, but repeated simulation, inversion, optimization, and uncertainty quantification can be prohibitively expensive. Scientific machine learning (SciML) reduces this computational burden by learning surrogate maps from data and physical constraints. Representative approaches include physics-informed neural networks (PINNs) \cite{raissi2019pinn} and neural operators such as DeepONet \cite{lu2021deeponet}, the Fourier neural operator (FNO) \cite{li2021fno}, and more general operator-learning frameworks \cite{kovachki2023neuraloperator}.

In many scientific and engineering settings, the relevant data are naturally distributed. Hospitals, laboratories, industrial facilities, sensor networks, and simulation centers may collect data under different operating regimes but be unable or unwilling to centralize their raw observations. Federated learning (FL) allows these clients to optimize a shared model by communicating parameters rather than transferring data \cite{mcmahan2017fedavg,kairouz2021advances}. Federated DeepONet \cite{moya2022feddeeponet} and federated SciML frameworks \cite{zhang2024fedsciml} have demonstrated the feasibility of collaborative operator learning and physics-informed learning.

The central obstacle is statistical heterogeneity. In conventional classification benchmarks, class labels provide a natural discrete variable, and a symmetric Dirichlet distribution can vary the degree of label skew continuously \cite{hsu2019noniid}. PDE operator datasets have no analogous universal label. Their inputs may include initial conditions, boundary conditions, source functions, coefficient fields, geometries, physical parameters, or combinations of these quantities. Constructing a Dirichlet split directly in the input space therefore requires equation-specific choices of the partition axis, continuous-to-discrete mapping, distance, and normalization. A multifactor partition additionally requires a joint or hierarchical allocation rule. A rule defined for scalar viscosity cannot be transferred unchanged to a coefficient field or geometry. Similarly, a coordinate partition designed for PINN collocation points does not characterize heterogeneity among samples of an input-to-solution operator. Consequently, these protocols become increasingly complex as more physical factors are considered and provide no natural basis for comparison across PDEs.

Proximity in the input space is also not equivalent to proximity in the response space. A PDE solution operator can amplify, suppress, or qualitatively transform an input perturbation, particularly in nonlinear, multiscale, advection-dominated, or near-transition regimes. Consequently, two clients with similar separation under an input norm may exhibit markedly different solution distributions and learning gradients, whereas heterogeneous inputs may produce similar responses.

This ambiguity has three consequences. First, nominally similar non-IID experiments may control different physical factors and are therefore difficult to compare. Second, a selected partition variable may not predict the gradients that drive federated optimization. Third, a scalar severity parameter does not guarantee comparable realized heterogeneity across finite datasets, client counts, or PDE regimes. These limitations impede reproducibility and obscure whether an observed performance gap arises from the federated optimizer, the partition protocol, or the underlying PDE.

We address these limitations with \emph{solution-space PDE-Dirichlet}. Given a labeled operator-learning dataset, the protocol normalizes and discretizes continuous solution fields into response bins that serve as PDE pseudo-classes. A symmetric Dirichlet distribution then controls the allocation of these bins across clients. A discrete optimal-transport distance between client bin histograms, with the distances between solution centroids as the ground cost, quantifies the realized response-space heterogeneity. Because the construction operates on the supervised response rather than on a task-specific input axis, it provides a common interface for forward operators, temporal prediction, and inverse mappings.

The contribution is not the Dirichlet distribution, clustering, or optimal transport in isolation. To the best of our knowledge, this work presents the first cross-equation FL-PDE benchmark that constructs non-IID clients by discretizing labeled solution fields and couples the resulting partition with a solution-geometry-aware transport metric and an explicit concentration-to-heterogeneity analysis. Our contributions are threefold. We introduce solution-space partitioning, in which normalized solution fields are discretized into reusable pseudo-classes and transport over their centroids measures the realized separation between clients. We establish an exact $(K\alpha+1)^{-1}$ population concentration law and identify the conditions under which response heterogeneity induces gradient disagreement, local-update dispersion, and parameter divergence. We then test these predictions across controlled and application-oriented PDEs, multiple physical regimes, different client counts and optimizers, and three neural-operator families.

FedAvg alternates local optimization and weighted model averaging \cite{mcmahan2017fedavg}. When client objectives differ, multiple local updates can move client models in incompatible directions. FedProx \cite{li2020fedprox} regularizes local objectives, while SCAFFOLD \cite{karimireddy2020scaffold} uses control variates to reduce client drift. Broader surveys identify statistical heterogeneity as one of the defining challenges of FL \cite{kairouz2021advances}.

Evaluation protocols are as important as optimization algorithms. Naturally partitioned benchmarks such as LEAF preserve user-level heterogeneity \cite{caldas2019leaf}, whereas controlled studies commonly synthesize label skew through shard-based or Dirichlet partitions \cite{hsu2019noniid}. For non-classification tasks, FedNLP showed that continuous examples can be embedded, clustered into pseudo-labels, and subsequently allocated by a label-based Dirichlet procedure \cite{lin2022fednlp}. Thus, neither pseudo-label construction nor cluster-then-Dirichlet allocation is itself specific to, or introduced by, the present study. The unresolved PDE-specific problem is to choose a representation and distance that remain interpretable across equations whose inputs and outputs are continuous fields. Input-based PDE partitions must select and discretize one or more task-specific factors, including coordinates, coefficients, boundary conditions, geometries, and regime parameters. The required bins, scales, and joint allocation rules differ across equations and can conflate covariate, parameter, geometric, and response shifts.

PINNs incorporate differential-equation residuals and boundary or initial conditions into a learning objective \cite{raissi2019pinn}. Neural operators instead learn maps between function spaces and can amortize PDE solution over families of inputs. DeepONet represents an operator using branch and trunk subnetworks \cite{lu2021deeponet}; FNO parameterizes global integral operators in Fourier space \cite{li2021fno}. A unified operator-learning perspective and approximation theory are reviewed in \cite{kovachki2023neuraloperator}.

Public benchmarks have improved the reproducibility of SciML. PDEBench contains diverse forward and inverse PDE tasks, large simulation datasets, and multiple baselines \cite{takamoto2022pdebench}. CFDBench targets generalization across boundary conditions, fluid properties, and geometries in computational fluid dynamics \cite{luo2023cfdbench}. OpenFWI provides large-scale synthetic seismic data and velocity models for full-waveform inversion \cite{deng2022openfwi}, building on architectures such as InversionNet \cite{wu2020inversionnet}. These benchmarks standardize the underlying scientific data, but they do not specify how to construct comparable federated non-IID clients.

Moya and Lin introduced federated training for DeepONet and studied stochastic and adaptive optimization for dynamical-system operators \cite{moya2022feddeeponet}. Zhang \emph{et al.} evaluated FedPINN and FedDeepONet and used the $1$-Wasserstein distance to relate data heterogeneity to prediction error and weight divergence \cite{zhang2024fedsciml}. Their partitions are tailored to specific input domains, coordinates, or data-generation procedures. Such designs are useful for controlled, equation-specific studies, but their extension across PDE families requires new axes, scales, and allocation rules. Moreover, a partition along one coordinate or parameter is not directly comparable to a partition along another.

Recent work has also incorporated differential-equation constraints into decentralized FL and analyzed convergence or empirical robustness under heterogeneous data \cite{alfano2025dflpinn,thalakanti2026pidfl}. These studies develop physics-informed learning and aggregation algorithms rather than a transferable protocol for constructing client distributions. In a related but non-federated direction, LAM-PINN clusters parameterized PDE tasks using PDE parameters and learning-affinity measurements to support modular meta-learning \cite{park2026lampinn}. Its purpose is task transfer and model reuse, rather than the controlled allocation of labeled operator-learning samples among clients. Collectively, these studies establish the importance of PDE and task heterogeneity, while leaving open the definition of a common response-space coordinate for federated benchmark construction.

Our objective is complementary. We do not introduce a new federated optimizer, and we do not claim the generic use of clustering, Dirichlet allocation, or Wasserstein distance as novel. We define a common response-space protocol for evaluating existing and future FL algorithms on labeled PDE datasets. The same construction applies to forward operators, temporal field prediction, and inverse mappings because it acts on the supervised response space rather than requiring a task-specific choice of input axis. The resulting transport geometry also permits direct comparison with input-space distance on the same realized partition.

\section{Results}\label{sec:results}

\subsection{A concentration law links solution-space allocation to federated dynamics}
\label{sec:theory}

This section summarizes the theoretical relation between the Dirichlet concentration and federated drift. The complete assumptions and proofs are provided in the appendices.

\subsubsection{Concentration Controls Solution-Bin Heterogeneity}

Let $\pi_b$ be the global proportion of bin $b$. For the population Dirichlet allocation, define the mass-adjusted raw client profile
\begin{equation}
    u_{kb}=K\pi_bR_{kb},
\end{equation}
where $(R_{1b},\ldots,R_{Kb})$ follows
\eqref{eq:dirichlet}. Its client mean is exactly $\pi_b$. Appendix~\ref{app:dirichlet} proves
\begin{equation}
    \E[\cH_u]
    =
    \frac{K-1}{K\alpha+1}
    \sum_{b=1}^{B}\pi_b^2,
    \qquad
    \cH_u
    =
    \frac{1}{K}\sum_k\|u_k-\pi\|_2^2.
    \label{eq:main-alpha-law}
\end{equation}
For balanced bins, this becomes
\begin{equation}
    \E[\cH_u]
    =
    \frac{K-1}{B(K\alpha+1)}.
\end{equation}
This result is unconditional: the expected allocation heterogeneity decreases strictly as $\alpha$ increases. Client-mass normalization, multinomial sampling, and minimum-size repair perturb the exact population law and therefore motivate the use of diagnostics computed from the realized partitions.

\subsubsection{Solution Geometry Transfers to Gradient Heterogeneity}

Let $F_b$ be the loss conditioned on solution bin $b$. Under random sampling within bins,
\begin{equation}
    F_k(\theta)=\sum_bq_{kb}F_b(\theta).
\end{equation}
With $G(\theta)=[\nabla F_1(\theta),\ldots,\nabla F_B(\theta)]$,
\begin{equation}
    \nabla F_k(\theta)-\nabla F(\theta)
    =
    G(\theta)(q_k-\bar q).
\end{equation}
If $G$ has restricted singular values
$0<m_g\leq M_g$ on the zero-sum histogram subspace, Appendix~\ref{app:gradient} establishes
\begin{equation}
    m_g^2\cH_q
    \leq
    \cH_g
    \leq
    M_g^2\cH_q,
    \label{eq:main-gradient-transfer}
\end{equation}
where
\begin{equation}
    \cH_q=\sum_kp_k\|q_k-\bar q\|_2^2,
    \qquad
    \cH_g=\sum_kp_k\|g_k-g\|_2^2.
\end{equation}
The lower bound does not hold automatically; it formalizes the requirement that the solution bins preserve distinctions that are relevant to the learning gradients.

The centroid transport distance is also equivalent to histogram distance up to explicit constants when all off-diagonal centroid costs are positive and finite. Consequently, response transport predicts gradient heterogeneity whenever both the transport geometry and gradient identifiability conditions hold.

\subsubsection{Local Update Dispersion}

For $E$ local gradient steps with learning rate $\eta$, smooth client objectives, and bounded gradients, Appendix~\ref{app:local} proves
\begin{equation}
    \left|
    \sqrt{\cD_{\mathrm{loc}}}
    -
    \eta E\sqrt{\cH_g}
    \right|
    \leq
    \frac{\eta^2LG_0E(E-1)}{2}.
    \label{eq:main-local}
\end{equation}
Thus,
\begin{equation}
    \cD_{\mathrm{loc}}
    =
    \eta^2E^2\cH_g+O(\eta^3E^3).
\end{equation}
For unbiased mini-batch SGD, an additional root-mean-square term proportional to
$\eta\sqrt{E\sum_kp_k\sigma_k^2}$ accounts for gradient noise. This result directly links solution-space heterogeneity to the dispersion of client updates.

\subsubsection{Global Trajectory Drift Is a Second-Order Effect}

Local dispersion alone does not imply that the aggregated model differs from a model obtained through centralized training. Let
$H_k=\nabla^2F_k(\theta)$,
$H=\sum_kp_kH_k$, and
$g=\sum_kp_kg_k$. When FedAvg and a centralized comparator both perform $E$ full-gradient steps from the same parameter, Appendix~\ref{app:trajectory} derives
\begin{equation}
    \theta_{\mathrm{FA}}^+
    -
    \theta_{\mathrm{C}}^+
    =
    \eta^2\frac{E(E-1)}{2}
    \sum_kp_k(H_k-H)(g_k-g)
    +
    O(\eta^3E^3).
    \label{eq:main-global-drift}
\end{equation}
Under full participation and sample-count weighting, the first-order terms cancel. Therefore, a single local full-gradient step produces no deterministic drift in the aggregated trajectory, even when the client gradients differ. Multiple local steps reveal a covariance-like interaction between curvature and gradients.

An upper bound follows from weighted Cauchy--Schwarz:
\begin{equation}
    \|\theta_{\mathrm{FA}}^+-\theta_{\mathrm{C}}^+\|_2
    \leq
    \eta^2\frac{E(E-1)}{2}
    \sqrt{\cH_H\cH_g}
    +
    O(\eta^3E^3).
\end{equation}
A monotone lower bound requires a non-cancellation condition or an alignment condition between curvature and gradients. Under this additional condition, the squared trajectory drift inherits an asymptotic dependence of order $(K\alpha+1)^{-2}$. Without such alignment, a decrease in $\alpha$ can increase partition and gradient heterogeneity while the global drift remains non-monotonic.

Finally, smoothness of the test risk gives
\begin{equation}
    \left|
    \Delta_{\cR}
    -
    \nabla\cR(\theta_{\mathrm{C}})^\top
    (\theta_{\mathrm{FA}}-\theta_{\mathrm{C}})
    \right|
    \leq
    \frac{L_{\cR}}{2}
    \|\theta_{\mathrm{FA}}-\theta_{\mathrm{C}}\|_2^2.
\end{equation}
Away from a stationary centralized solution, the linear term can have either sign. Signed error drift is therefore a conditional downstream observable rather than a quantity controlled unconditionally by the Dirichlet concentration.

The centralized comparator in this local expansion is an analytical device that isolates the second-order effect of heterogeneous local objectives. In the experiments, we instead measure drift relative to a same-seed FedAvg trajectory with $\alpha=100$, thereby matching the optimizer, communication schedule, and number of local steps exactly. We therefore use the theory to predict gradient and parameter separation and interpret the sign of the paired empirical difference in test error as a conditional outcome.

Unless stated otherwise, the curves show the mean across five seeds, and the shaded
regions or error bars denote two-sided $95\%$ Student-$t$ confidence intervals.
The signed excess error is reported in percentage points (pp) and is paired by
seed:
\begin{equation}
    \Delta e_{\alpha}^{(s)}
    =100\left(e_{\alpha}^{(s)}-e_{100}^{(s)}\right).
    \label{eq:empirical-excess-error}
\end{equation}
Thus, a value of zero indicates that the non-IID run and the corresponding near-IID FedAvg reference have
the same relative test error, whereas a positive value indicates degradation. This
paired definition removes variation arising from data generation and
initialization, but it does not require a positive value for every
seed or task.

\subsection{Finite-Sample Partition Behavior}
\label{sec:result-partition}

\begin{figure}[t]
    \centering
    \includegraphics[width=\textwidth]{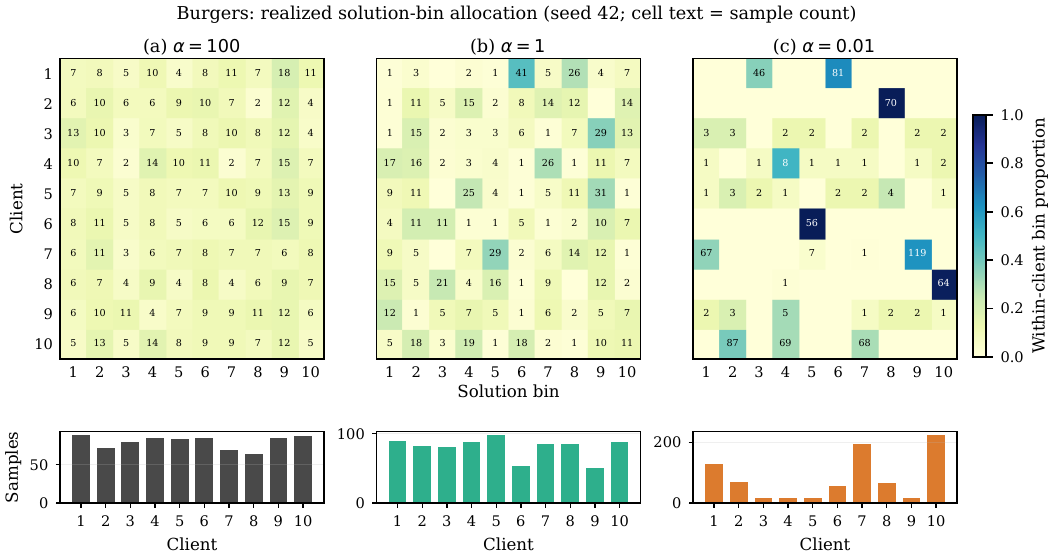}
    \caption{A representative realized partition for the Burgers task
    (seed 42). Each heat-map row represents a client, and each column represents a solution bin.
    Color indicates the within-client bin proportion, and the text in each cell gives the
    corresponding sample count. The lower panels report the total sample count
    of each client. Decreasing $\alpha$ jointly increases solution-bin skew and
    quantity heterogeneity without duplicating or discarding samples.}
    \label{fig:client-bin-distributions}
\end{figure}

Fig.~\ref{fig:client-bin-distributions} illustrates the continuous-to-discrete
construction before federated optimization. At
$\alpha=100$, every client receives examples from nearly all ten solution bins
and the client totals are comparatively balanced. At $\alpha=1$, several bins
become specific to individual clients. At $\alpha=0.01$, most clients are dominated by one or
a few bins and their sample counts differ substantially. Importantly, this is
a single finite realization rather than an averaged histogram; it therefore shows the
actual partition used by FedAvg.

\begin{figure}[t]
    \centering
    \includegraphics[width=\textwidth]{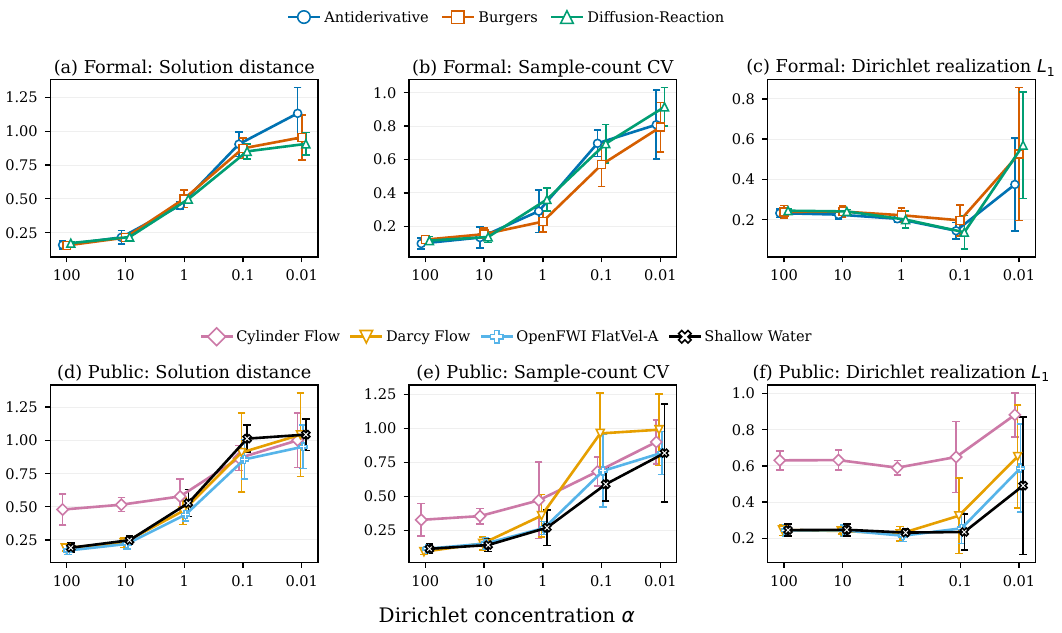}
    \caption{Realized partition statistics for the controlled tasks (top) and
    application-oriented public tasks (bottom). Points denote means across seeds, and
    error bars are $95\%$ confidence intervals. Solution distance is computed
    from optimal transport over solution-bin centroids; sample-count CV
    measures quantity heterogeneity; Dirichlet realization $L_1$ measures the
    finite-sample discrepancy between the target and realized allocations.}
    \label{fig:partition-statistics}
\end{figure}

The aggregate diagnostics in Fig.~\ref{fig:partition-statistics} confirm that
the concentration parameter controls the intended degree of heterogeneity. Across the three
controlled tasks, mean solution distance rises from $0.15$--$0.17$ at
$\alpha=100$ to $0.45$--$0.50$ at $\alpha=1$ and $0.91$--$1.13$ at
$\alpha=0.01$. The same ordering holds for all four public tasks. The coefficient
of variation of the client sample counts increases concurrently, indicating that the
without-replacement protocol jointly generates compositional and quantity
heterogeneity.

The values are not identical across equations. In particular, Cylinder Flow
has a solution distance of approximately $0.48$ even at $\alpha=100$, compared
with $0.17$--$0.19$ for the other public tasks. This task contains only 80 training
examples distributed across ten clients and therefore exhibits greater
granularity after multinomial sampling. Moreover, the allocation $L_1$ error increases at
$\alpha=0.01$, when sparse target proportions, integer counts, and minimum-size
repair interact most strongly. These observations support the use of the realized
distance $D_{\mathrm{sol}}$, rather than the nominal value of $\alpha$ alone, for
comparisons across tasks. They also provide empirical support for the first part of RQ1:
a smaller value of $\alpha$ reliably produces greater realized solution-space
heterogeneity, subject to a visible finite-sample floor and saturation.

\subsection{Controlled Tasks: Optimization Trajectories and Final Error}
\label{sec:result-controlled}

\begin{figure}[t]
    \centering
    \includegraphics[width=\textwidth]{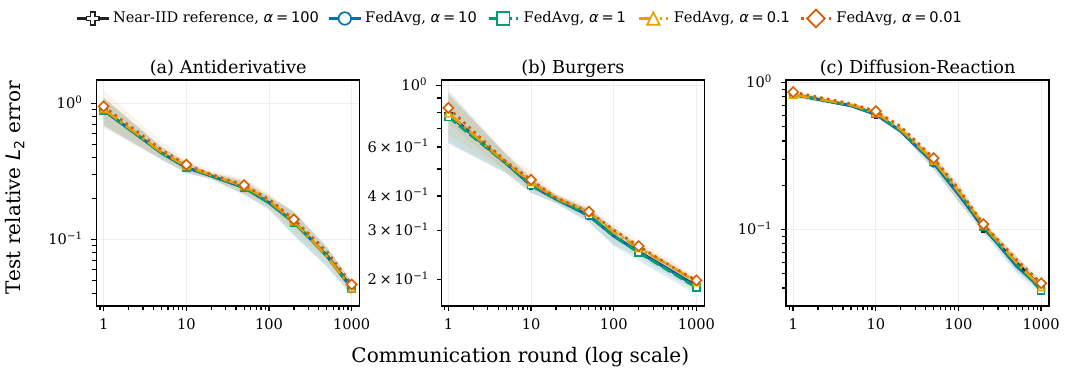}
    \caption{Test relative $L_2$ error on the controlled tasks. All curves use
    sample-weighted FedAvg; $\alpha=100$ is the same-seed near-IID reference.
    Curves are evaluated at the recorded checkpoints through 1000 communication
    rounds. The logarithmic vertical scale emphasizes convergence over the full
    optimization trajectory.}
    \label{fig:formal-test-error}
\end{figure}

All controlled configurations learn the corresponding target operators, as shown in
Fig.~\ref{fig:formal-test-error}. At round 1000, the near-IID relative errors
are $4.42\%$, $18.97\%$, and $4.10\%$ for Antiderivative, Burgers, and
Diffusion-Reaction, respectively. The curves remain close on the logarithmic
scale; therefore, the absolute test error alone obscures differences smaller than one
percentage point. The paired excess-error trajectories in
Fig.~\ref{fig:formal-drift} reveal these differences directly.

\begin{figure}[t]
    \centering
    \includegraphics[width=\textwidth]{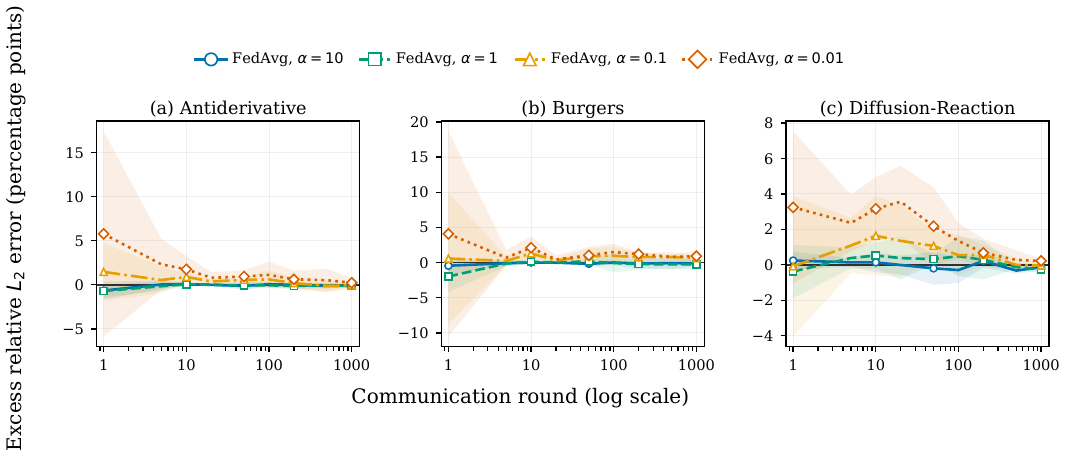}
    \caption{Paired excess relative $L_2$ error for the controlled tasks,
    measured against the same-seed $\alpha=100$ FedAvg trajectory. Positive
    values indicate non-IID degradation. The broad early confidence regions
    show that transient drift is substantially more variable than final-round
    drift.}
    \label{fig:formal-drift}
\end{figure}

Two findings emerge. First, pronounced heterogeneity can cause several
percentage points of transient excess error during the first tens of rounds,
but much of this gap is reduced by continued communication. Second, the
remaining round-1000 effect is task dependent. For Burgers, $\alpha=0.1$ and
$0.01$ yield $0.645$ pp ($95\%$ CI $[0.271,1.018]$) and $0.916$ pp
($[0.238,1.595]$), respectively. Diffusion-Reaction also has a positive
$0.218$ pp effect at $\alpha=0.01$ ($[0.115,0.320]$). By contrast, the
mean for Antiderivative at $\alpha=0.01$ is $0.230$ pp, with a confidence interval
$[-0.292,0.751]$ that includes zero. Negative mean values at moderate
heterogeneity do not contradict the proposed mechanism. A non-IID allocation changes the
optimization trajectory and can occasionally improve generalization relative to a
finite near-IID reference.

\begin{figure}[t]
    \centering
    \includegraphics[width=\textwidth]{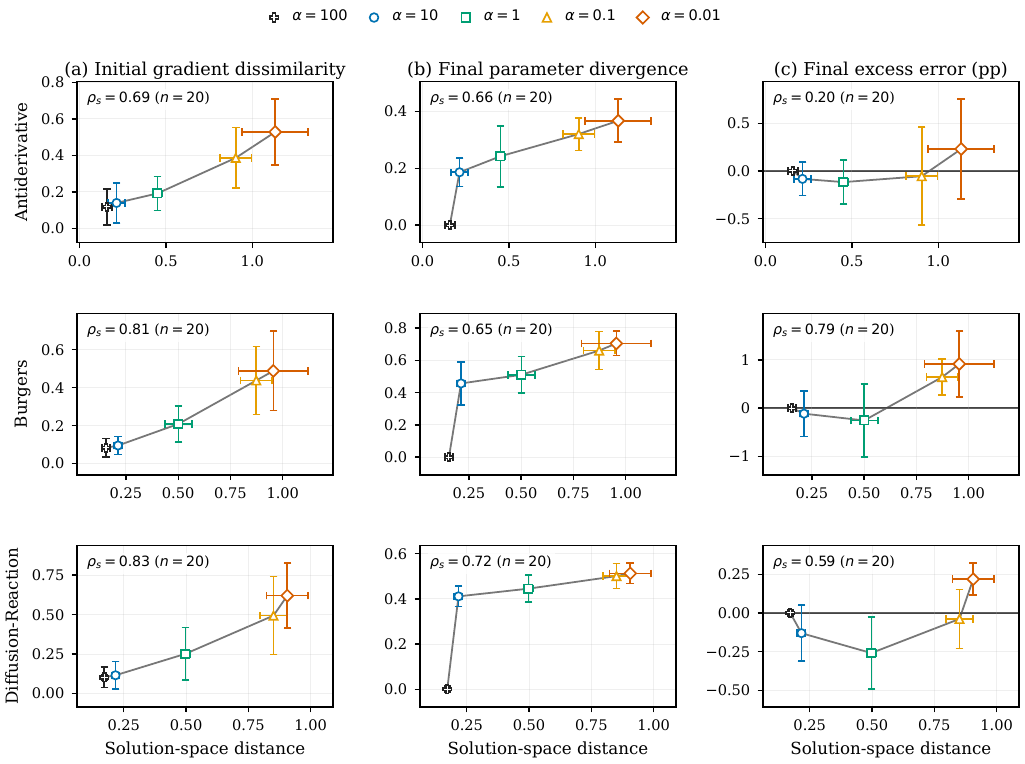}
    \caption{Mechanism diagnostics on the controlled tasks. Columns relate
    realized solution distance to initial gradient dissimilarity, round-1000
    parameter divergence from the paired near-IID model, and round-1000 excess
    error. Each point represents a mean across five seeds; error bars denote $95\%$
    confidence intervals. Spearman correlations are computed from the 20
    non-IID seed-level observations, excluding $\alpha=100$.}
    \label{fig:formal-mechanisms}
\end{figure}

Fig.~\ref{fig:formal-mechanisms} separates three relations that a single error metric would obscure. Solution distance is positively associated with
initial gradient dissimilarity for all controlled tasks
($\rho_s=0.69$, $0.81$, and $0.83$) and with final parameter divergence
($\rho_s=0.66$, $0.65$, and $0.72$). These consistent associations support
the geometric and optimization relations examined in RQ1. The association with the final excess
error is weaker and more task dependent: $\rho_s=0.20$ for Antiderivative,
$0.79$ for Burgers, and $0.59$ for Diffusion-Reaction. Thus, the experiments
support the chain from concentration to distance and then to gradients, but they do not support an
unconditional claim that a smaller $\alpha$ must monotonically increase the final
test error.

The distinction is also visible in parameter space. At $\alpha=0.01$, the
mean relative parameter divergences are $0.367$, $0.704$, and $0.513$ for the
three tasks, even though the Antiderivative excess-error interval includes zero.
Parameter deviation is therefore a more sensitive indicator of non-IID
optimization than downstream test error, although parameter deviation alone is
insufficient to establish degraded generalization.

Fig.~\ref{fig:formal-seeds} explains the wide intervals in the trajectory
plots. The increasing trend is most reproducible for Burgers and for the most
extreme Diffusion-Reaction setting. Antiderivative contains both positive and
negative seed-level effects. We consequently interpret confidence intervals
that include zero as unresolved effects rather than as evidence that the
partition has no effect on optimization.

\subsection{Ablations: Client Count, Optimizer, and Physical Regime}
\label{sec:result-ablation}

Increasing the number of clients from 10 to 20 does not produce a uniform increase in
signed excess error (Fig.~\ref{fig:ablation-clients}). The clearest effect
again occurs for Burgers: at $\alpha=0.01$, the mean increases from $0.916$ to $1.151$ pp,
and both confidence intervals exclude zero. Diffusion-Reaction increases from
$0.218$ to $0.367$ pp, whereas Antiderivative remains unresolved. The
confidence intervals for the two client counts overlap, so the data support
the robustness of the severe-Burgers effect but not a precise monotonic dependence
on $K$. This result is expected because changing $K$ modifies both the number of local
objectives and the number of samples available to each objective.

Optimizer choice materially changes the observed downstream drift. Momentum
SGD produces smaller and non-monotonic excess errors, and all three
confidence intervals at $\alpha=0.01$ include zero. This result does not indicate
that SGD eliminates the effects of non-IID data. The near-IID errors of SGD at round 1000 are $11.54\%$,
$24.34\%$, and $12.67\%$ on Antiderivative, Burgers, and
Diffusion-Reaction, compared with $4.42\%$, $18.97\%$, and $4.10\%$ for Adam.
SGD therefore converges to a region with higher error, in which the paired error
difference is compressed. The optimizer ablation supports the theoretical
qualification that signed test-error drift depends on the optimization
trajectory and local test-risk geometry, not only on the partition.

\begin{figure}[t]
    \centering
    \begin{minipage}{0.64\textwidth}
        \centering
        \includegraphics[width=\linewidth]{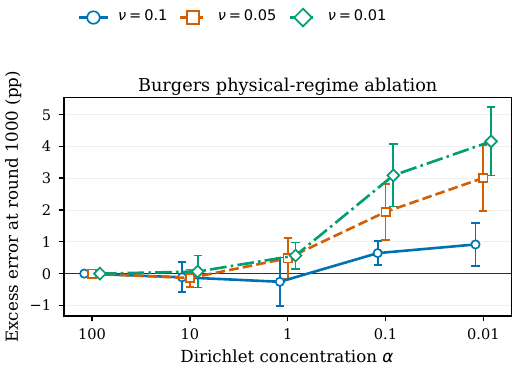}
        \caption{Burgers physical-regime ablation. Lower viscosity produces
        sharper solution features and a larger round-1000 excess error under
        severe solution-space heterogeneity.}
        \label{fig:ablation-viscosity}
    \end{minipage}
\end{figure}

The physical-regime ablation in Fig.~\ref{fig:ablation-viscosity} provides the
strongest evidence for a downstream effect. At $\alpha=0.01$, reducing the viscosity from
$\nu=0.1$ to $0.05$ and $0.01$ increases the mean excess error from $0.916$ pp
to $3.007$ pp ($[1.963,4.052]$) and $4.157$ pp ($[3.085,5.228]$), respectively.
The same ordering is already visible at $\alpha=0.1$. Lower viscosity creates
sharper transported structures and makes the operator-learning problem more difficult; consequently,
client specialization in the solution space has a greater effect on the optimization
trajectory. This result provides a more precise answer to RQ3: pronounced solution
heterogeneity is most consequential in physically difficult regimes, whereas
simple or strongly smoothed operators can absorb much of the perturbation.

\subsection{Application-Oriented Public Benchmarks}
\label{sec:result-public}

\begin{figure}[t]
    \centering
    \includegraphics[width=\textwidth]{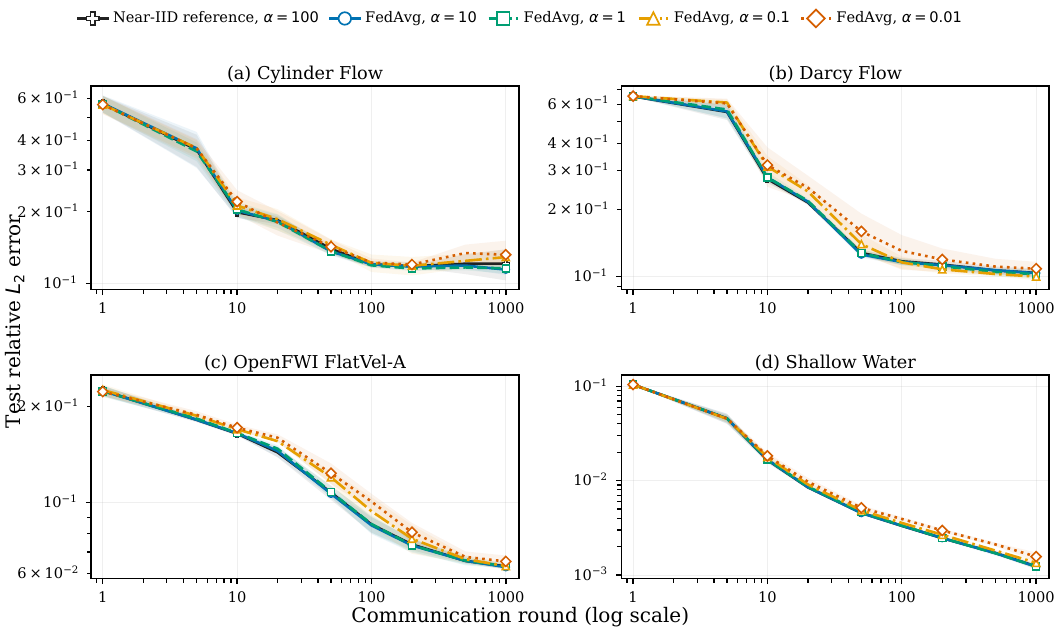}
    \caption{Test relative $L_2$ error on four public PDE benchmarks using FNO2D
    or InversionNet-lite. All trajectories are shown through round 1000 and the
    shaded regions are $95\%$ confidence intervals across five seeds.}
    \label{fig:public-test-error}
\end{figure}

\begin{figure}[t]
    \centering
    \includegraphics[width=\textwidth]{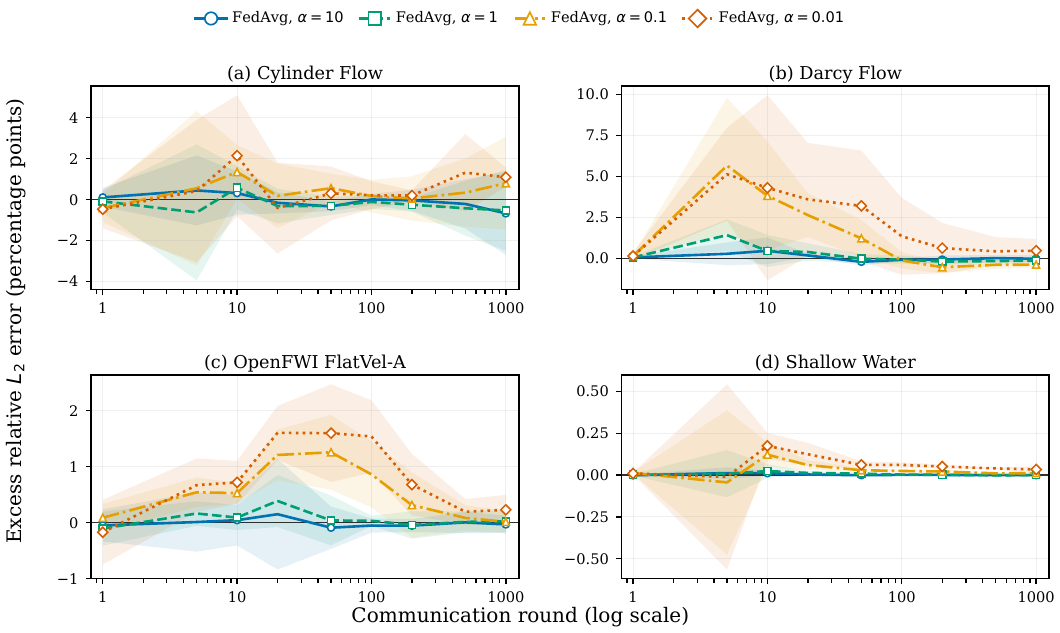}
    \caption{Paired excess-error trajectories on the public PDE benchmarks.
    The non-IID penalty can be transient: Darcy Flow and OpenFWI exhibit the
    largest mean gaps before round 100 and substantially recover by round 1000.}
    \label{fig:public-drift}
\end{figure}

The public tasks preserve the main mechanism while exhibiting different
downstream behavior. The test errors of all configurations decrease and stabilize in
Fig.~\ref{fig:public-test-error}. However, Fig.~\ref{fig:public-drift} demonstrates
that a comparison restricted to the final round would overlook substantial transient effects. For Darcy
Flow, severe heterogeneity produces roughly $4$--$5$ pp excess error in the
early rounds, but the mean decreases to $0.465$ pp by round 1000. The excess error for OpenFWI
similarly peaks near $1.5$ pp before decreasing to $0.228$ pp. Continued
communication can therefore reduce optimization delay even when the final
models remain far apart in parameter space.

At $\alpha=0.01$, Cylinder Flow has a final excess error of $1.091$ pp
($[0.617,1.565]$), whose confidence interval excludes zero. Darcy Flow yields
$0.465$ pp ($[-0.254,1.184]$), and OpenFWI yields $0.228$ pp
($[-0.038,0.495]$); both confidence intervals include zero.
Shallow Water has a much smaller absolute effect, $0.033$ pp
($[0.013,0.053]$), but the interval excludes zero. Because its near-IID error is
only $0.123\%$, this small percentage-point increase corresponds to an
approximately $27\%$ relative increase over the reference error. Absolute and
relative interpretations should therefore both be considered.

\begin{figure}[t]
    \centering
    \includegraphics[width=\textwidth]{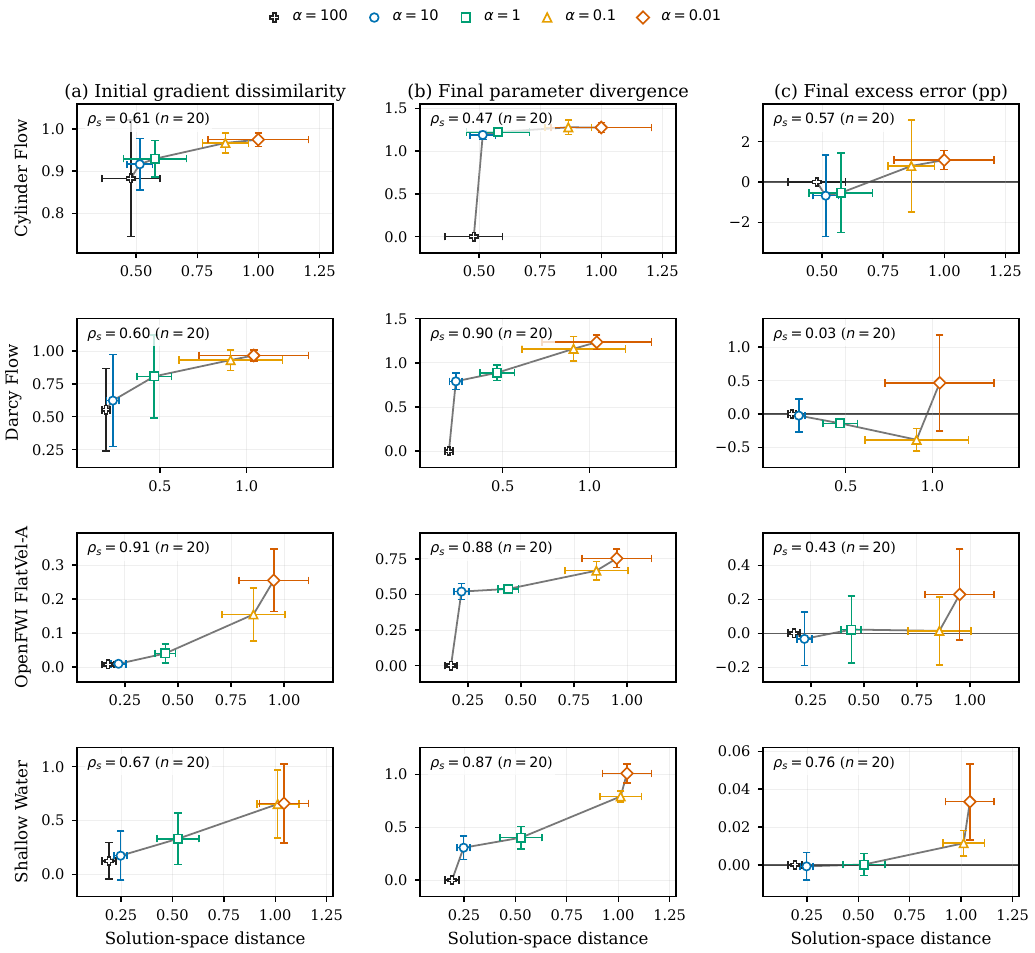}
    \caption{Mechanism diagnostics on the public PDE benchmarks. Definitions
    and statistical conventions follow Fig.~\ref{fig:formal-mechanisms}.
    Solution distance remains positively associated with gradients and final
    parameters, whereas its association with final excess error varies by
    task.}
    \label{fig:public-mechanisms}
\end{figure}

The public mechanism panels reinforce the distinction among the observables.
The solution distance has a positive Spearman correlation with gradient
dissimilarity on all four tasks ($0.60$--$0.91$). Its association with final
parameter divergence is $0.90$, $0.88$, and $0.87$ for Darcy Flow, OpenFWI,
and Shallow Water, but only $0.47$ for the small Cylinder Flow dataset. By
contrast, the correlation between distance and error ranges from $0.03$ for Darcy Flow to
$0.76$ for Shallow Water. These results generalize the distance-to-gradient
and distance-to-parameter evidence across FNO and inversion architectures, but
do not support a universal monotonic relation between distance and final error.

The seed-level results in Fig.~\ref{fig:public-seeds} identify the source of
uncertainty. The responses vary across seeds for Cylinder Flow, Darcy Flow, and OpenFWI,
whereas the effect under severe heterogeneity is directionally consistent for Shallow Water
despite its small magnitude. The public results therefore support the robustness
of the proposed partition as a probe of the underlying mechanism, but do not imply
that its effect on the final error is invariant across datasets.

\begin{table}[t]
\centering
\caption{Round-1000 results at $\alpha=0.01$. Errors and confidence intervals
are in percentage points; $D_{\theta}$ is a dimensionless relative parameter
distance. A confidence interval that excludes zero is shown in bold.}
\label{tab:alpha001-summary}
\begin{tabular}{lcccccc}
\toprule
Task & $D_{\mathrm{sol}}$ & $H_g^{\mathrm{norm}}$ & $D_{\theta}$ &
$100e_{100}$ & $\Delta e_{0.01}$ & $95\%$ CI \\
\midrule
Antiderivative & 1.133 & 0.528 & 0.367 & 4.419 & 0.230 & $[-0.292,0.751]$ \\
Burgers & 0.955 & 0.488 & 0.704 & 18.969 & 0.916 & $\mathbf{[0.238,1.595]}$ \\
Diffusion-Reaction & 0.906 & 0.619 & 0.513 & 4.100 & 0.218 & $\mathbf{[0.115,0.320]}$ \\
Cylinder Flow & 1.000 & 0.974 & 1.275 & 12.114 & 1.091 & $\mathbf{[0.617,1.565]}$ \\
Darcy Flow & 1.041 & 0.966 & 1.234 & 10.381 & 0.465 & $[-0.254,1.184]$ \\
OpenFWI FlatVel-A & 0.952 & 0.255 & 0.753 & 6.313 & 0.228 & $[-0.038,0.495]$ \\
Shallow Water & 1.043 & 0.658 & 1.009 & 0.123 & 0.033 & $\mathbf{[0.013,0.053]}$ \\
\bottomrule
\end{tabular}
\end{table}

\subsection{Input-Space Comparison and Overall Findings}
\label{sec:result-synthesis}

\begin{table}[t]
\centering
\caption{Pearson correlations computed from the controlled seed--$\alpha$ records.
Each entry reports the correlation for the solution distance followed by that for the input-space $W_1$ distance.}
\label{tab:distance-comparison}
\begin{tabular}{lccc}
\toprule
Task & Gradient & Parameter & Excess error \\
\midrule
Antiderivative & 0.81/0.83 & 0.79/0.76 & 0.20/0.23 \\
Burgers & 0.88/0.86 & 0.77/0.73 & 0.72/0.75 \\
Diffusion-Reaction & 0.84/0.84 & 0.72/0.65 & 0.40/0.54 \\
\bottomrule
\end{tabular}
\end{table}

Table~\ref{tab:distance-comparison} provides a deliberately conservative answer
to RQ2. The solution distance is comparable to the input-space $W_1$ distance for gradient
heterogeneity and is consistently more correlated with parameter divergence,
but the input-space $W_1$ distance is slightly more strongly correlated with the final excess error in these
three controlled tasks. Thus, the evidence does not justify claiming that
the solution distance is universally the best scalar predictor. Its advantage is
instead conceptual and procedural: the same response-based construction
and ground cost apply to input functions, coefficient fields, transient
states, and inverse problems without selecting a different task-specific input
axis.

Taken together, the experiments answer the research questions as follows.
First, moving from the near-IID reference towards smaller values of $\alpha$ produces
substantially larger realized solution distance, quantity heterogeneity,
gradient disagreement, and parameter deviation across all seven tasks, with
finite-sample floors and saturation in the most concentrated regime. Second,
the final signed error is a conditional response: it is
largest and most reproducible for Burgers at low viscosity, while simple tasks
and long training can substantially reduce the final gap. Third, client count
and optimizer affect the magnitude but do not invalidate the partition
mechanism. Finally, the consistent geometric trends across DeepONet, FNO2D,
and InversionNet-lite support solution-space PDE-Dirichlet as a reproducible
non-IID benchmark, while the seed-level intervals and non-monotonic cases define
the limits of what can be claimed from the present evidence.

\section{Discussion}
\label{sec:discussion}

Solution-space PDE-Dirichlet provides a common experimental coordinate for data types that would otherwise require unrelated partition rules. Across all seven tasks, reducing the concentration parameter increases the realized solution-transport distance and is accompanied by greater gradient disagreement and final parameter divergence. The agreement across DeepONet, FNO2D, and InversionNet-lite indicates that this effect is not specific to a single neural-operator architecture. These observations support the use of solution geometry as a reproducible description of the optimization environment induced by a finite federated partition.

Nevertheless, the response of the downstream test error is conditional rather than universal. The largest and most reproducible penalties occur for low-viscosity Burgers, for which sharper transported structures make specialization in the solution space consequential. By contrast, Darcy Flow and OpenFWI exhibit substantial transient penalties that largely diminish with additional communication, whereas several smoother controlled tasks show only small final differences. The choice of optimizer can also compress the signed gap by moving both the heterogeneous and reference trajectories towards a region with higher error. The solution distance should therefore be interpreted as a control variable and a probe of the underlying mechanism, not as a deterministic surrogate for final accuracy.

This distinction clarifies the role of the proposed protocol. The protocol is intended for the offline construction and characterization of federated PDE benchmarks, rather than for the assignment of naturally private deployment data after federation has begun. A single response-based construction can be reused across input functions, coefficient fields, transient states, and inverse targets, while still revealing the heterogeneity realized after integer allocation and minimum-size repair. The comparison with the input-space $W_1$ distance is deliberately conservative: neither scalar distance dominates for every downstream observable, but solution-space transport avoids the need to select a different physical axis for each equation and is more consistently associated with final parameter divergence in the controlled suite.

Several limitations define the scope of these conclusions. The benchmark construction requires labeled responses and globally fitted cluster centers. K-means provides a finite Euclidean quantization of a potentially multimodal solution manifold; therefore, the number of bins, response normalization, grid resolution, and response representation remain design choices. The present experiments fix $B=10$ and use a normalized Euclidean centroid cost; they do not yet constitute an exhaustive sensitivity study over the bin count, representation, or ground metric. The exact concentration law applies to the raw population allocation, whereas multinomial sampling, unequal client totals, and minimum-size repair perturb the finite partition, particularly at $\alpha=0.01$. Moreover, the transfer from solution heterogeneity to gradient heterogeneity requires identifiable bin-conditioned gradients, and the transfer from parameter divergence to degradation in the signed test error requires additional assumptions about the local test-risk geometry. The application-oriented datasets broaden the coverage of equations and architectures, but they are primarily synthetic and cannot replace observations that are naturally siloed across institutions.

Together, the results establish a falsifiable benchmark rather than an unconditional law governing the final error. Important next steps are to calibrate solution-space distances against naturally occurring client partitions, replace Euclidean response quantization with physical representations that are stable under discretization, and determine whether solution-space diagnostics can guide client sampling or aggregation. These extensions could transform the present probe of the underlying mechanism into an actionable tool for federated scientific learning.

\section{Methods}
\label{sec:methods}

\subsection{Federated operator-learning formulation}
\label{sec:problem}

\subsubsection{Supervised PDE Operator Learning}

Let $\mathcal{A}$ be an input-function space, $\Lambda$ a PDE-parameter space, and $\mathcal{U}$ a solution space. A PDE family induces a solution operator
\begin{equation}
    \cS:\mathcal{A}\times\Lambda\rightarrow\mathcal{U},
    \qquad
    (a,\lambda)\mapsto u.
    \label{eq:solution-operator}
\end{equation}
The input $a$ may represent an initial condition, forcing function, boundary condition, coefficient field, or observation field; $\lambda$ denotes physical parameters; and $u$ is a forward solution or inverse target. A discretized supervised sample is
\begin{equation}
    z_i=(a_i,\lambda_i,u_i),
    \qquad
    u_i=\cS_{\lambda_i}(a_i).
\end{equation}
A neural operator $\mathcal{G}_{\theta}$ is trained by minimizing
\begin{equation}
    F(\theta)
    =
    \frac{1}{N}\sum_{i=1}^{N}
    \ell\!\left(\mathcal{G}_{\theta}(a_i,\lambda_i),u_i\right),
    \label{eq:global-empirical-risk}
\end{equation}
where the primary supervised loss is the mean squared error over the output grid.

For Cartesian DeepONet, the branch network $B_{\theta_b}$ encodes an input function sampled at sensors, and the trunk network $T_{\theta_t}$ encodes an output coordinate $y$. The prediction is
\begin{equation}
    \mathcal{G}_{\theta}(a)(y)
    =
    \left\langle
    B_{\theta_b}(a),T_{\theta_t}(y)
    \right\rangle+b_0.
    \label{eq:deeponet}
\end{equation}
For regular two-dimensional fields, FNO layers provide a grid-based alternative. For seismic inversion, an encoder--decoder maps seismic shot gathers to a velocity image.

\subsubsection{Federated Objective and Drift Observables}

The training set is partitioned into disjoint local datasets
$\cD_1,\ldots,\cD_K$, with $n_k=|\cD_k|$ and
$\sum_kn_k=N$. Client $k$ minimizes
\begin{equation}
    F_k(\theta)
    =
    \frac{1}{n_k}
    \sum_{z\in\cD_k}\ell(\theta;z),
\end{equation}
and the global objective satisfies
\begin{equation}
    F(\theta)
    =
    \sum_{k=1}^{K}p_kF_k(\theta),
    \qquad
    p_k=\frac{n_k}{N}.
    \label{eq:weighted-objective}
\end{equation}
At communication round $t$, every participating client receives
$\theta^t$, performs $E$ local optimizer steps, and returns
$\theta_k^{t+1}$. Standard sample-weighted FedAvg computes
\begin{equation}
    \theta^{t+1}
    =
    \sum_{k=1}^{K}p_k\theta_k^{t+1}.
    \label{eq:fedavg}
\end{equation}

We distinguish three notions of drift. \emph{Gradient heterogeneity} measures disagreement among $\nabla F_k$ at a common initialization. \emph{Parameter divergence} compares a non-IID FedAvg trajectory with a near-IID FedAvg trajectory generated using the same seed. \emph{Signed error drift} compares the corresponding test errors. These quantities are related but not equivalent. In particular, a signed error difference can be negative even when the parameter trajectories differ substantially.

\subsection{Solution-space PDE-Dirichlet construction}
\label{sec:method}

\subsubsection{Global Response Normalization and Discretization}

Let $U\in\R^{N\times d_u}$ contain vectorized training responses. We compute
\begin{equation}
    \bar u
    =
    \frac{1}{N}\sum_{i=1}^{N}u_i,
    \qquad
    s_u
    =
    \left(
    \frac{1}{N}\sum_{i=1}^{N}\|u_i-\bar u\|_2^2
    \right)^{1/2},
\end{equation}
and normalize each response as
\begin{equation}
    \widetilde u_i
    =
    \frac{u_i-\bar u}{s_u}.
    \label{eq:response-normalization}
\end{equation}
We use a single global scale because coordinate-wise standardization could disproportionately emphasize grid locations with low energy.

We apply deterministic K-means++ initialization \cite{arthur2007kmeans} followed by Lloyd iterations to solve
\begin{equation}
    \min_{\{c_b\}_{b=1}^{B},\{y_i\}_{i=1}^{N}}
    \sum_{i=1}^{N}
    \|\widetilde u_i-c_{y_i}\|_2^2,
    \qquad
    y_i\in\{1,\ldots,B\}.
    \label{eq:kmeans}
\end{equation}
The label $y_i$ is the solution bin of sample $i$. The response-space cost between bins is
\begin{equation}
    C_{bc}
    =
    \|c_b-c_c\|_2.
    \label{eq:centroid-cost}
\end{equation}
This discretization retains the coarse geometry of the solutions while avoiding the
$O(N^2)$ storage required by a full response-distance matrix.

The benchmark construction is an offline simulation protocol and therefore has access to the labels in the benchmark training set, just as a classification benchmark can use class labels to construct synthetic clients. The protocol is not a privacy mechanism and does not assume that a deployed server can inspect naturally decentralized labels. In a real federation, the protocol can instead characterize a pre-existing partition when suitable aggregate bin statistics are available.

\subsubsection{Dirichlet Allocation over Solution Bins}

For each solution bin $b$, draw
\begin{equation}
    r_b=(r_{1b},\ldots,r_{Kb})
    \sim
    \operatorname{Dirichlet}(\alpha\mathbf{1}_K).
    \label{eq:dirichlet}
\end{equation}
Let $N_b$ be the number of training samples in bin $b$. Integer client counts are sampled as
\begin{equation}
    (m_{1b},\ldots,m_{Kb})
    \sim
    \operatorname{Multinomial}(N_b,r_b).
    \label{eq:multinomial}
\end{equation}
A small value of $\alpha$ produces sparse bin ownership, whereas a large value approaches a balanced allocation within each bin. Because the total client mass $n_k=\sum_bm_{kb}$ is random, response heterogeneity and quantity heterogeneity arise jointly from the same allocation. Section~\ref{sec:finite-protocol} describes the finite-sample implementation, including exhaustive assignment and minimum-size handling.

For each data-generation seed, the solution bins are fitted once and then reused for every value of $\alpha$. This design ensures that variation across values of $\alpha$ arises from client allocation rather than from refitting the response discretization.

\subsubsection{Solution-Geometry Severity and Partition Diagnostics}

The realized bin histogram of client $k$ is
\begin{equation}
    h_{kb}
    =
    \frac{m_{kb}}{n_k}.
\end{equation}
For histograms $h_i$ and $h_j$, the discrete response-transport distance is
\begin{equation}
    W_C(h_i,h_j)
    =
    \min_{\Gamma\geq0}
    \langle C,\Gamma\rangle
\end{equation}
subject to
\begin{equation}
    \Gamma\mathbf{1}=h_i,
    \qquad
    \Gamma^\top\mathbf{1}=h_j.
\end{equation}
This is a finite optimal-transport problem \cite{peyre2019ot}. The partition severity is the mean pairwise distance
\begin{equation}
    D_{\mathrm{sol}}
    =
    \frac{2}{K(K-1)}
    \sum_{1\leq i<j\leq K}
    W_C(h_i,h_j).
    \label{eq:solution-distance}
\end{equation}

We additionally record the following diagnostics:
\begin{align}
    \varepsilon_{\mathrm{part}}
    &=
    \frac{1}{K}\sum_{k=1}^{K}
    \|h_k-h_k^{\mathrm{target}}\|_1,\\
    \varepsilon_{\mathrm{quant}}
    &=
    \frac{1}{N}\sum_{i=1}^{N}
    \|\widetilde u_i-c_{y_i}\|_2,\\
    \operatorname{CV}_n
    &=
    \frac{\operatorname{Std}(n_1,\ldots,n_K)}
    {\operatorname{Mean}(n_1,\ldots,n_K)}.
\end{align}
The first quantity measures finite allocation and repair error, the second measures information loss caused by discretization, and the third measures quantity heterogeneity.

For comparison, the implementation also estimates an input-space $W_1$ distance on the \emph{same} client partition by using equal-size optimal matching between subsampled input functions. This diagnostic evaluates whether solution geometry predicts optimization heterogeneity more accurately than input geometry; it does not define a second partitioning method.

\subsubsection{Optimization Diagnostics}

At a shared initialization, let
$g_k=\nabla F_k(\theta^0)$ and
$\bar g=\sum_kp_kg_k$. We use normalized gradient dissimilarity
\begin{equation}
    H_g^{\mathrm{norm}}
    =
    \frac{
    \sum_kp_k\|g_k-\bar g\|_2^2
    }{
    \sum_kp_k\|g_k\|_2^2+\epsilon
    }.
    \label{eq:gradient-metric}
\end{equation}
Let $\theta_{\alpha}$ and $\theta_{\mathrm{IID}}$ denote the final FedAvg models trained from the same initialization and data-generation seed with concentration $\alpha$ and the near-IID reference $\alpha_{\mathrm{IID}}=100$, respectively. The parameter divergence is
\begin{equation}
    D_{\theta}
    =
    \frac{\|\theta_{\alpha}-\theta_{\mathrm{IID}}\|_2}
    {\|\theta_{\mathrm{IID}}\|_2+\epsilon}.
    \label{eq:weight-divergence}
\end{equation}
For relative test error
\begin{equation}
    e(\theta)
    =
    \frac{
    \|\mathcal{G}_{\theta}(A_{\mathrm{test}})-U_{\mathrm{test}}\|_2
    }{
    \|U_{\mathrm{test}}\|_2
    },
\end{equation}
the signed downstream drift is
\begin{equation}
    \Delta_e
    =
    e(\theta_{\alpha})-e(\theta_{\mathrm{IID}}).
    \label{eq:error-drift}
\end{equation}

\subsection{Experimental design and statistical analysis}
\label{sec:experiments}

\subsubsection{Research Questions}

The complete experimental design addresses the following questions.

\begin{itemize}
    \item \textbf{RQ1:} Does decreasing $\alpha$ increase realized solution-transport distance and gradient heterogeneity?
    \item \textbf{RQ2:} How does solution-transport distance compare with input-space $W_1$ as a predictor of gradient and parameter divergence on the same split?
    \item \textbf{RQ3:} Under which PDE regimes does stronger solution-space heterogeneity increase parameter and error drift relative to near-IID FedAvg?
    \item \textbf{RQ4:} Are the conclusions stable across client counts, optimizers, PDE parameters, random seeds, public datasets, and model families?
\end{itemize}

\subsubsection{Controlled Operator-Learning Tasks}

The controlled suite uses input functions constructed from ten Chebyshev modes with independently sampled coefficients. We generate high-fidelity labels before partitioning the data. Table~\ref{tab:controlled-tasks} summarizes the implementation.

\begin{table}[t]
\centering
\caption{Controlled operator-learning tasks. The training and test sets are generated independently for each random seed.}
\label{tab:controlled-tasks}
\footnotesize\setlength{\tabcolsep}{4pt}
\begin{tabular}{p{0.15\linewidth}p{0.29\linewidth}p{0.10\linewidth}p{0.15\linewidth}p{0.14\linewidth}}
\toprule
Task & Operator map & \shortstack[l]{Train/\\Test} & \shortstack[l]{Sensors/\\Outputs} & Cartesian DeepONet \\
\midrule
Antiderivative &
$a(x)\mapsto u(x)$, where $u_x=a$ and $u(0)=0$ &
$1000/1000$ & $100/100$ & width 40, depth 2 \\
Diffusion--reaction &
$f(x)\mapsto u(x,t)$, where
$u_t=\kappa u_{xx}+\rho u^2+f$,
$\kappa=\rho=0.01$ &
$1000/1000$ & $101/101^2$ & width 100, depth 3 \\
Viscous Burgers &
$u_0(x)\mapsto u(x,t)$, where
$u_t+uu_x=\nu u_{xx}$ with periodic boundaries &
$800/500$ & $101/101^2$ & width 64, depth 2 \\
\bottomrule
\end{tabular}
\end{table}

For Burgers, the main setting is $\nu=0.1$, and the physical-regime study uses
$\nu\in\{0.1,0.05,0.01\}$. Lower viscosity produces sharper structures and allows us to assess whether the solution discretization and federated diagnostics remain informative in more nonlinear regimes.

\subsubsection{Application-Oriented Public Tasks}

We further evaluate the same partition protocol on the public tasks listed in Table~\ref{tab:public-tasks}. PDEBench \cite{takamoto2022pdebench} provides Darcy-flow and radial dam-break shallow-water data, CFDBench \cite{luo2023cfdbench} provides cylinder-flow cases, and OpenFWI \cite{deng2022openfwi} provides seismic shot gathers paired with velocity maps. These datasets represent engineering applications of PDEs but are generated predominantly through numerical simulation rather than obtained from field measurements.

\begin{table}[t]
\centering
\small
\setlength{\tabcolsep}{4pt}
\caption{Application-oriented public datasets and model mappings.}
\label{tab:public-tasks}
\begin{tabular}{p{0.12\linewidth}p{0.15\linewidth}p{0.29\linewidth}p{0.17\linewidth}p{0.12\linewidth}}
\toprule
Task & Source & Learning map & Model & Train/Test \\
\midrule
Darcy flow & PDEBench &
Permeability field $\mapsto$ pressure solution &
FNO2D & $800/200$ \\
Shallow water & PDEBench &
Initial multi-channel water state $\mapsto$ future state &
FNO2D & $800/100$ \\
Cylinder flow & CFDBench &
Initial velocity field $\mapsto$ future velocity field &
FNO2D & $80/20$ \\
Full-waveform inversion & OpenFWI FlatVel-A &
Seismic shot gathers $\mapsto$ subsurface velocity image &
InversionNet-lite & $800/200$ \\
\bottomrule
\end{tabular}
\end{table}

FNO2D uses four spectral layers, a width of 32, and 12 Fourier modes. InversionNet-lite retains a waveform encoder and an image decoder but uses a reduced width so that multiple federated client replicas fit on a workstation GPU.

\subsubsection{Finite-Sample Partition Implementation}
\label{sec:finite-protocol}

For each data-generation seed, we fit the solution normalization and cluster centers once on the complete benchmark training set and assign the corresponding bin labels. We then reuse these quantities for every value of $\alpha$. This procedure isolates the effect of client allocation from changes in the discretization. For each solution bin, we draw a symmetric Dirichlet vector and convert it into integer client counts through multinomial sampling. We randomly permute the samples within that bin and assign all samples according to these counts. The assignment is therefore strictly without replacement: every training sample belongs to exactly one client, and no sample is duplicated or discarded.

The resulting client totals are generally unequal; therefore, compositional heterogeneity and quantity heterogeneity arise from the same allocation rather than from two separate procedures. If a client receives fewer than the prescribed minimum number of samples, we transfer examples from clients with surplus samples while preserving the one-to-one global assignment. The default minimum is 16 samples, whereas the smaller CFDBench cylinder task uses a minimum of four samples. FedAvg subsequently weights the client updates by the realized sample fractions $n_k/N$. Algorithm~\ref{alg:pde-dirichlet} summarizes the finite-sample construction.

\begin{algorithm}[t]
\caption{Finite-Sample Solution-Space PDE-Dirichlet Partition}
\label{alg:pde-dirichlet}
\begin{algorithmic}[1]
\REQUIRE Labeled training set $\{(x_i,u_i)\}_{i=1}^{N}$, client count $K$, bin count $B$, concentration $\alpha$, minimum size $n_{\min}$
\ENSURE Disjoint client index sets $\{\mathcal{I}_k\}_{k=1}^{K}$
\STATE Normalize the solution fields and fit $B$ clusters to obtain labels $b_i$ and centroids $c_b$
\STATE Initialize $\mathcal{I}_k\leftarrow\varnothing$ for all $k$
\FOR{$b=1,\ldots,B$}
    \STATE Draw $p_b\sim\operatorname{Dirichlet}(\alpha\mathbf{1}_K)$
    \STATE Draw $(m_{1b},\ldots,m_{Kb})\sim\operatorname{Multinomial}(N_b,p_b)$
    \STATE Randomly permute the $N_b$ indices satisfying $b_i=b$
    \STATE Assign consecutive disjoint blocks of sizes $m_{1b},\ldots,m_{Kb}$ to the clients
\ENDFOR
\WHILE{$\min_k|\mathcal{I}_k|<n_{\min}$}
    \STATE Transfer one sample from a surplus client to a deficient client
\ENDWHILE
\STATE Verify $\mathcal{I}_k\cap\mathcal{I}_{\ell}=\varnothing$ for $k\ne\ell$ and $\bigcup_k\mathcal{I}_k=\{1,\ldots,N\}$
\RETURN $\{\mathcal{I}_k\}_{k=1}^{K}$ and sample weights $|\mathcal{I}_k|/N$
\end{algorithmic}
\end{algorithm}

\subsubsection{Federated and Optimization Protocol}

Unless otherwise stated, all experiments use
\begin{equation}
    \begin{gathered}
        K=10,\qquad B=10,\\
        \alpha\in\{100,10,1,0.1,0.01\},
        \qquad \alpha_{\mathrm{IID}}=100.
    \end{gathered}
\end{equation}
The controlled experiments use five seeds
$\{0,1,42,999,2026\}$ and 1000 communication rounds. During each round, every client performs five mini-batch optimization steps with shuffled local data, a batch size of 64, Adam \cite{kingma2015adam}, and a learning rate of $10^{-3}$. For each seed, we train the near-IID FedAvg trajectory with $\alpha=100$ once and reuse it as the paired reference for every non-IID concentration. All comparisons therefore use the same initialization, number of communication rounds, local-step budget, aggregation rule, and optimizer; only the realized client allocation varies. The reference is near-IID rather than the mathematical limit $\alpha\rightarrow\infty$.

Only the model parameters are broadcast and aggregated. The optimizer state of each client is retained locally across communication rounds and is not transmitted to the server. This implementation detail is relevant to the Adam experiments because the first- and second-moment states remain specific to each client.

The public field tasks use batch sizes between 8 and 16, depending on memory requirements. The client-count study additionally evaluates $K=20$. The optimizer ablation uses SGD with a learning rate of $10^{-2}$ and momentum of $0.9$; the model, partition, number of local steps, number of rounds, and seeds remain fixed. Adam retains its default learning rate of $10^{-3}$, so the comparison uses an appropriate configuration for each optimizer rather than imposing the same learning rate on both optimizers.

\subsubsection{Metrics and Statistical Reporting}

The primary partition metrics are the realized solution transport
$D_{\mathrm{sol}}$, normalized input-space $W_1$ distance, target-to-realized histogram error, quantization error, coefficient of variation of client size, and minimum and maximum client sizes. The optimization metrics are the initial gradient dissimilarity \eqref{eq:gradient-metric}, relative parameter divergence \eqref{eq:weight-divergence}, relative test errors for near-IID and non-IID FedAvg, and signed excess error \eqref{eq:error-drift}.

We compute every aggregate from seed-level raw records. We report the mean, sample standard deviation, and two-sided $95\%$ Student-$t$ confidence interval
\begin{equation}
    \bar x
    \pm
    t_{0.975,S-1}
    \frac{s}{\sqrt{S}},
    \qquad S=5.
\end{equation}
We compute correlations between the distance and optimization metrics from seed-level observations rather than only from means at each value of $\alpha$. The near-IID reference is fixed and cached for each seed to avoid introducing retraining noise into comparisons across values of $\alpha$.

Each reproducibility record contains the task and dataset identifiers, architecture, optimizer and learning rate, number of clients, number of solution bins, concentration, random seed, local-step budget, communication-round budget, realized client sizes, client--bin count matrix, partition indices, and checkpoint-level optimization measurements. Figures and tables are generated from the seed-level records rather than from manually transcribed summary values. The normalization parameters and cluster centers are fitted once per data-generation seed and retained with the corresponding partition metadata.

To distinguish transient optimization behavior from the final-round result, signed error drift is additionally recorded at communication rounds
\begin{equation}
    r\in\{0,1,5,10,20,50,100,200,500,1000\}.
\end{equation}
At round $r$, we compare each non-IID FedAvg trajectory with the trajectory for $\alpha=100$ generated using the same seed and evaluated at the same communication round. Round zero verifies the shared initialization. We store these checkpoint measurements as seed-level raw records and construct confidence intervals across the five seeds rather than across checkpoints. The curves display all recorded checkpoints through round 1000, and every scalar ``final'' result is evaluated at round 1000.

\subsubsection{Comparisons and Ablations}

The principal comparisons examine non-IID FedAvg against a paired near-IID FedAvg reference, nominal $\alpha$ against realized solution transport, and solution transport against input-space $W_1$ as predictors of gradient dissimilarity, parameter divergence, and excess error. The ablations compare Adam with momentum SGD, 10 clients with 20 clients, and three Burgers viscosity regimes. We further compare controlled DeepONet tasks with public FNO and inversion tasks. We retain the quantization error and partition-target error to identify settings in which discretization or minimum-size repair, rather than the intended concentration, dominates the result.

\section*{Data availability}
The controlled Antiderivative, Diffusion-Reaction, and Burgers datasets are generated from the equations and parameter ranges specified in Methods. The application-oriented experiments use PDEBench, CFDBench, and OpenFWI, which are publicly available from the repositories cited in the corresponding dataset publications. The processed partition indices, seed-level measurements, and derived datasets required to reproduce every figure will be deposited in a repository that assigns a DOI before publication. During review, these materials will be provided in the anonymous code archive.

\section*{Code availability}
The source code for data generation, solution-space partitioning, federated training, statistical aggregation, and figure production is maintained in a version-controlled repository. The submission archive contains the exact machine-readable configurations, environment specification, cached partition indices, and commands used for all reported experiments. A public, immutable release with a persistent identifier will accompany the preprint.

%


\numberwithin{equation}{section}
\numberwithin{theorem}{section}
\numberwithin{proposition}{section}
\numberwithin{lemma}{section}
\numberwithin{definition}{section}
\numberwithin{assumption}{section}
\numberwithin{corollary}{section}

\begin{appendices}

\section{Supplementary robustness figures}
\label{app:supplementary-figures}
\setcounter{figure}{0}
\renewcommand{\thefigure}{S\arabic{figure}}
\renewcommand{\theHfigure}{supp.\arabic{figure}}

The following seed-level and ablation figures support the uncertainty and robustness analyses reported in the main text.

\begin{figure}[t]
    \centering
    \includegraphics[width=\textwidth]{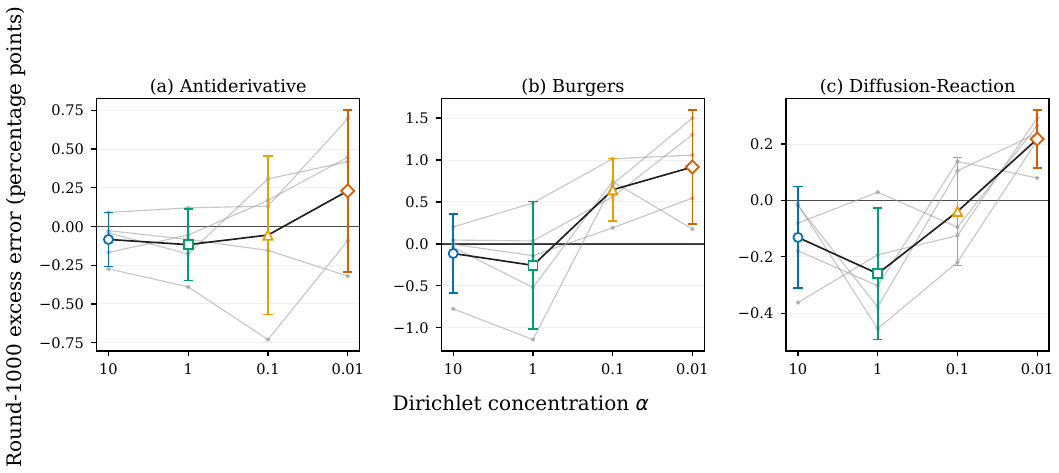}
    \caption{Seed-level round-1000 excess errors on the controlled tasks. Thin
    gray lines retain the paired result for each seed; colored markers show the
    mean and $95\%$ confidence interval. Reporting all replicates prevents an
    apparently monotonic mean trend from concealing sensitivity to the seed.}
    \label{fig:formal-seeds}
\end{figure}

\begin{figure}[t]
    \centering
    \includegraphics[width=\textwidth]{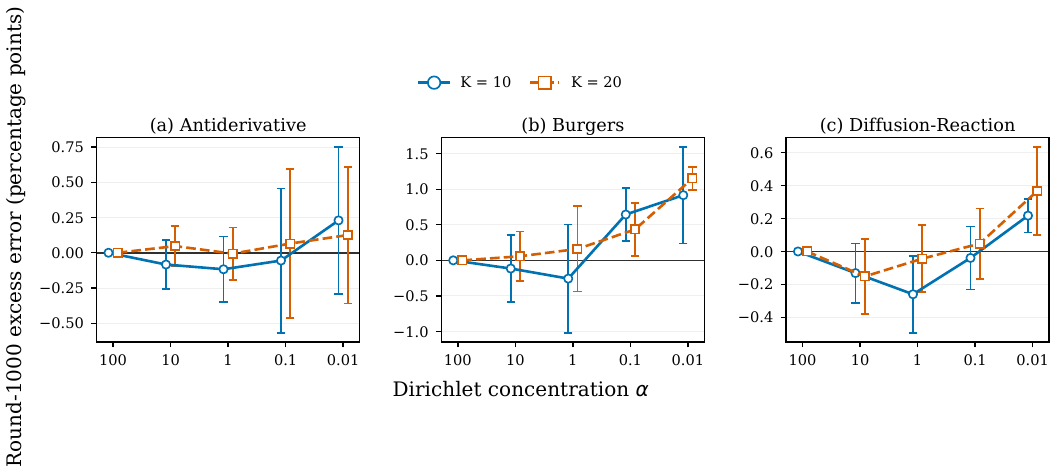}
    \caption{Client-count ablation at round 1000. Both settings use
    sample-weighted FedAvg and the corresponding same-seed references with $\alpha=100$.
    Error bars denote $95\%$ confidence intervals across five seeds.}
    \label{fig:ablation-clients}
\end{figure}

\begin{figure}[t]
    \centering
    \includegraphics[width=\textwidth]{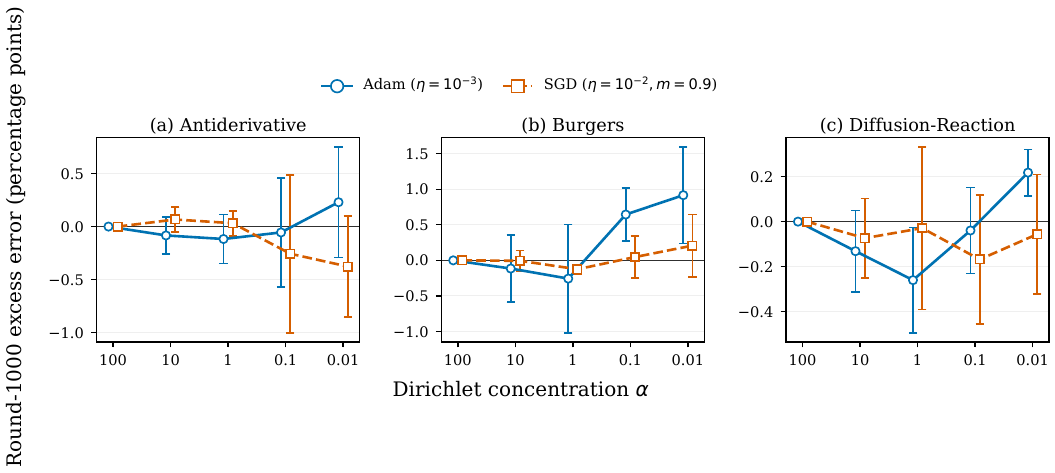}
    \caption{Optimizer ablation at round 1000. Adam uses learning rate
    $10^{-3}$; SGD uses learning rate $10^{-2}$ and momentum $0.9$. Each
    optimizer is compared with a reference that uses the same seed and optimizer
    with $\alpha=100$.}
    \label{fig:ablation-optimizer}
\end{figure}

\begin{figure}[t]
    \centering
    \includegraphics[width=\textwidth]{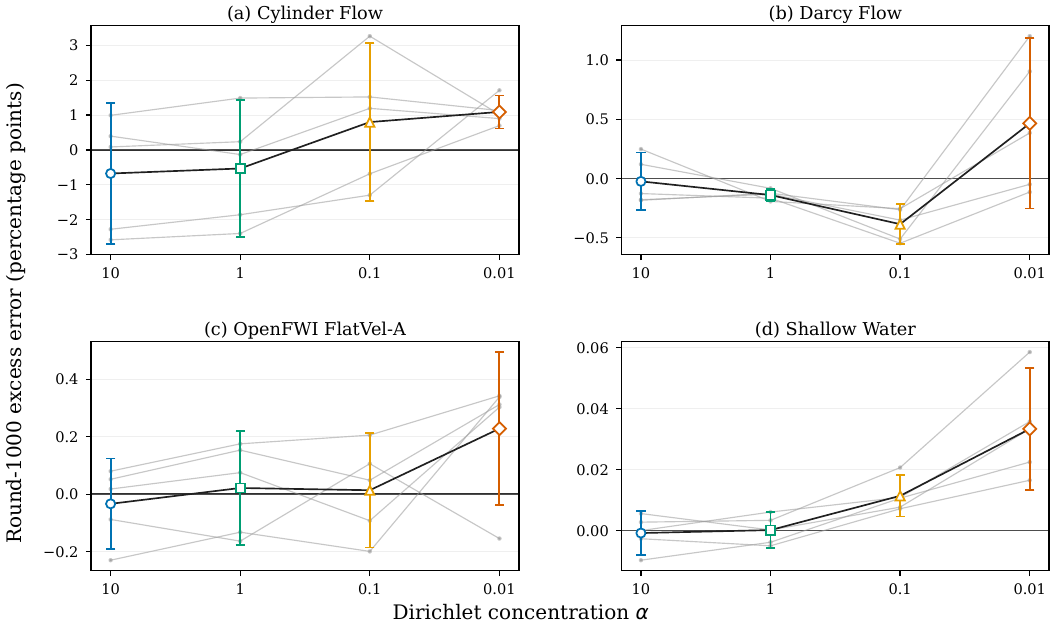}
    \caption{Seed-level round-1000 excess errors on the public benchmarks. Thin
    gray trajectories show the paired result for each seed; colored markers report
    the means and $95\%$ confidence intervals.}
    \label{fig:public-seeds}
\end{figure}

\FloatBarrier
\section{Scope and Population Formulation}
\label{app:scope}

This appendix establishes the theoretical basis for solution-space non-IID partitioning in federated PDE learning. We distinguish three claims. First, decreasing the Dirichlet concentration parameter increases heterogeneity among clients in the solution-bin space. Second, this heterogeneity increases gradient and local-update dispersion when the bins preserve solution geometry that is relevant to learning. Third, a monotonic increase in the \emph{signed} test-error gap between FedAvg and centralized training requires additional assumptions about curvature and alignment. The first claim is distributional, the second is geometric, and the third depends on the optimization process.

Let
\begin{equation}
    \mathcal{S}_{\lambda}:a\mapsto u
\end{equation}
denote a PDE solution operator, where $a$ is an initial condition, boundary condition, forcing field, or coefficient field; $\lambda$ contains the PDE parameters; and $u$ is the solution. A neural operator $\mathcal{G}_{\theta}$ is trained using samples $z=(a,\lambda,u)$ and the loss
\begin{equation}
    \ell(\theta;z)
    =
    \mathcal{L}\!\left(\mathcal{G}_{\theta}(a,\lambda),u\right).
\end{equation}

The solution fields are mapped to $B$ solution-geometry bins by
$b(u)\in\{1,\ldots,B\}$. Let $\mathcal{P}_b$ be the data distribution conditioned on $b(u)=b$, and define
\begin{equation}
    F_b(\theta)
    =
    \mathbb{E}_{z\sim\mathcal{P}_b}[\ell(\theta;z)].
\end{equation}
Client $k$ has bin distribution
$q_k=(q_{k1},\ldots,q_{kB})^\top\in\Delta^{B-1}$. If examples are sampled randomly within each bin, then
\begin{equation}
    F_k(\theta)
    =
    \sum_{b=1}^{B}q_{kb}F_b(\theta).
    \label{eq:app-client-mixture}
\end{equation}
With standard FedAvg weights
$p_k=n_k/\sum_jn_j$,
\begin{align}
    F(\theta)
    &=
    \sum_{k=1}^{K}p_kF_k(\theta),\\
    \bar q
    &=
    \sum_{k=1}^{K}p_kq_k,
\end{align}
and hence
\begin{equation}
    F(\theta)
    =
    \sum_{b=1}^{B}\bar q_bF_b(\theta).
    \label{eq:app-global-mixture}
\end{equation}

\begin{definition}[Solution-bin heterogeneity]
\label{def:app-bin-heterogeneity}
The weighted solution-bin heterogeneity is
\begin{equation}
    \mathcal{H}_q
    =
    \sum_{k=1}^{K}p_k
    \|q_k-\bar q\|_2^2.
    \label{eq:app-Hq}
\end{equation}
\end{definition}

The mixture model isolates partition-induced heterogeneity. Finite-sample errors caused by integer allocation, sampling without replacement, and minimum-client-size repair are analyzed in Appendix~\ref{app:finite}.

\section{Dirichlet Allocation in Solution Space}
\label{app:dirichlet}

Let $\pi_b=\Pr\{b(u)=b\}$ be the global mass of solution bin $b$. For each bin, draw
\begin{equation}
    R_b=(R_{1b},\ldots,R_{Kb})^\top
    \sim
    \operatorname{Dirichlet}(\alpha\mathbf{1}_K).
    \label{eq:app-dirichlet}
\end{equation}
The examples in bin $b$ are then assigned without replacement according to $R_b$. We define the mass-adjusted raw profile, the corresponding client mass, and the normalized profile as
\begin{equation}
    u_{kb}=K\pi_bR_{kb},\qquad
    s_k=\sum_{b=1}^{B}u_{kb},\qquad
    q_{kb}=\frac{u_{kb}}{s_k}.
    \label{eq:app-profiles}
\end{equation}
The scale factor $K$ ensures $\mathbb{E}[s_k]=1$, while
\begin{equation}
    \frac{1}{K}\sum_{k=1}^{K}u_{kb}
    =
    \pi_b
\end{equation}
holds exactly because $\sum_kR_{kb}=1$.

\begin{lemma}[Symmetric Dirichlet moments]
\label{lem:app-dirichlet-moments}
For \eqref{eq:app-dirichlet},
\begin{align}
    \mathbb{E}[R_{kb}]
    &=
    \frac{1}{K},\\
    \operatorname{Var}(R_{kb})
    &=
    \frac{K-1}{K^2(K\alpha+1)},\\
    \operatorname{Cov}(R_{ib},R_{jb})
    &=
    -\frac{1}{K^2(K\alpha+1)},\quad i\neq j,\\
    \mathbb{E}[(R_{ib}-R_{jb})^2]
    &=
    \frac{2}{K(K\alpha+1)}.
    \label{eq:app-dirichlet-pair}
\end{align}
\end{lemma}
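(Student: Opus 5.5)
The plan is to obtain all four identities from the standard Dirichlet moment formulas, specialized to the symmetric parameter $\alpha\mathbf{1}_K$ with total concentration $\alpha_0=K\alpha$. We use either the Gamma representation $R_{kb}=G_k/\sum_j G_j$, with $G_j$ i.i.d.\ $\operatorname{Gamma}(\alpha,1)$, or directly the Dirichlet density. In both cases we first compute the mixed second moments
\begin{equation}
    \mathbb{E}[R_{ib}R_{jb}]
    =
    \frac{\alpha_i(\alpha_j+\delta_{ij})}{\alpha_0(\alpha_0+1)} .
\end{equation}
We derive this from the normalizing constant $\prod_k\Gamma(\alpha_k)/\Gamma(\alpha_0)$: multiplying the density by $r_ir_j$ shifts the parameters by one in the relevant coordinates. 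We then apply $\Gamma(x+1)=x\Gamma(x)$. The first moment $\mathbb{E}[R_{kb}]=\alpha/(K\alpha)=1/K$ follows the same way. Alternatively, it follows purely from exchangeability together with $\sum_kR_{kb}=1$.

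Next, substitute $\alpha_i=\alpha$ and $\alpha_0=K\alpha$. For $i=j$ this gives
\begin{equation}
    \mathbb{E}[R_{kb}^2]=\frac{\alpha+1}{K(K\alpha+1)} .
\end{equation}
Subtracting $1/K^2$ leaves
\begin{equation}
    \frac{K(\alpha+1)-(K\alpha+1)}{K^2(K\alpha+1)}
    =
    \frac{K-1}{K^2(K\alpha+1)} .
\end{equation}
For $i\neq j$ we get $\mathbb{E}[R_{ib}R_{jb}]=\alpha/(K(K\alpha+1))$. Subtracting $1/K^2$ yields $-1/(K^2(K\alpha+1))$. As a consistency check, the covariance also follows from the variance alone. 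Since $\operatorname{Var}(\sum_kR_{kb})=0$, we have $K\operatorname{Var}(R_{kb})+K(K-1)\operatorname{Cov}=0$, and by exchangeability all off-diagonal covariances are equal. This gives an independent confirmation that does not need the mixed moment.

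Finally, for \eqref{eq:app-dirichlet-pair} we note that $\mathbb{E}[R_{ib}-R_{jb}]=0$. Hence
\begin{equation}
    \mathbb{E}[(R_{ib}-R_{jb})^2]
    =
    \operatorname{Var}(R_{ib})+\operatorname{Var}(R_{jb})-2\operatorname{Cov}(R_{ib},R_{jb})
    =
    \frac{2(K-1)+2}{K^2(K\alpha+1)} ,
\end{equation}
which simplifies to $2/(K(K\alpha+1))$.

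There is no real obstacle; the only step requiring care is the Gamma-function bookkeeping in the second-moment formula. I would present it via the Gamma representation if the normalizing-constant argument seems too terse. With that representation, the variance and covariance follow from the aggregation property $(R_{kb},1-R_{kb})\sim\operatorname{Dirichlet}(\alpha,(K-1)\alpha)$, i.e.\ $R_{kb}$ is Beta distributed, combined with the exchangeability argument above.
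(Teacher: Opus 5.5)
Your proposal is correct and follows essentially the same route as the paper: specialize the standard Dirichlet mean, variance, and covariance formulas to $\alpha_k=\alpha$, $\alpha_0=K\alpha$, then obtain the pairwise identity from $\operatorname{Var}(R_{ib})+\operatorname{Var}(R_{jb})-2\operatorname{Cov}(R_{ib},R_{jb})$. The paper simply quotes the general moment formulas. You instead derive them from the mixed second moment and cross-check the covariance via $\operatorname{Var}(\sum_k R_{kb})=0$, which adds rigor but is not a different argument.
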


\begin{proof}
For a Dirichlet vector with parameters
$(\alpha_1,\ldots,\alpha_K)$ and
$\alpha_0=\sum_k\alpha_k$,
\begin{align}
    \mathbb{E}[R_k]
    &=
    \frac{\alpha_k}{\alpha_0},\\
    \operatorname{Var}(R_k)
    &=
    \frac{\alpha_k(\alpha_0-\alpha_k)}
    {\alpha_0^2(\alpha_0+1)},\\
    \operatorname{Cov}(R_i,R_j)
    &=
    -\frac{\alpha_i\alpha_j}
    {\alpha_0^2(\alpha_0+1)}.
\end{align}
Substituting $\alpha_k=\alpha$ and
$\alpha_0=K\alpha$ proves the first three identities. Since $R_{ib}$ and $R_{jb}$ have equal means,
\begin{align}
    \mathbb{E}[(R_{ib}-R_{jb})^2]
    &=
    \operatorname{Var}(R_{ib})
    +
    \operatorname{Var}(R_{jb})
    -
    2\operatorname{Cov}(R_{ib},R_{jb})\\
    &=
    \frac{2(K-1)+2}{K^2(K\alpha+1)}
    =
    \frac{2}{K(K\alpha+1)}.
\end{align}
The first equality follows from the variance identity for a difference.
\end{proof}

\begin{theorem}[Exact concentration law]
\label{thm:app-alpha-law}
Assume equal client weights and define
\begin{equation}
    \mathcal{H}_u
    =
    \frac{1}{K}\sum_{k=1}^{K}
    \|u_k-\pi\|_2^2,
    \qquad
    \pi=(\pi_1,\ldots,\pi_B)^\top.
\end{equation}
Then
\begin{equation}
    \mathbb{E}[\mathcal{H}_u]
    =
    \frac{K-1}{K\alpha+1}
    \sum_{b=1}^{B}\pi_b^2.
    \label{eq:app-exact-alpha}
\end{equation}
For balanced bins, $\pi_b=1/B$,
\begin{equation}
    \mathbb{E}[\mathcal{H}_u]
    =
    \frac{K-1}{B(K\alpha+1)}.
    \label{eq:app-balanced-alpha}
\end{equation}
Thus, the expected raw allocation heterogeneity is strictly decreasing in $\alpha$.
\end{theorem}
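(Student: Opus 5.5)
The plan is to compute the expectation bin by bin and client by client, using linearity and the moments in Lemma~\ref{lem:app-dirichlet-moments}. Write the heterogeneity as a double sum,
\begin{equation*}
    \mathcal{H}_u=\frac{1}{K}\sum_{k=1}^{K}\sum_{b=1}^{B}(u_{kb}-\pi_b)^2 .
\end{equation*}
Linearity of expectation lets us exchange the expectation with both sums. Independence across bins is not even needed, because every term involves only a single bin $b$.

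For a fixed pair $(k,b)$, substitute $u_{kb}=K\pi_bR_{kb}$. This gives $u_{kb}-\pi_b=K\pi_b(R_{kb}-1/K)$. By Lemma~\ref{lem:app-dirichlet-moments}, $\mathbb{E}[R_{kb}]=1/K$, so
\begin{equation*}
    \mathbb{E}[(u_{kb}-\pi_b)^2]=K^2\pi_b^2\operatorname{Var}(R_{kb})
    =K^2\pi_b^2\,\frac{K-1}{K^2(K\alpha+1)}
    =\frac{(K-1)\pi_b^2}{K\alpha+1}.
\end{equation*}
This value does not depend on $k$. Averaging over the $K$ clients with the factor $1/K$ therefore returns the same quantity, and summing over $b$ yields \eqref{eq:app-exact-alpha}.

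For balanced bins, set $\pi_b=1/B$. Then $\sum_b\pi_b^2=1/B$, which gives \eqref{eq:app-balanced-alpha}. Strict monotonicity follows from three facts: $K\ge2$ makes $K-1>0$; $\sum_b\pi_b^2\ge 1/B>0$ by Cauchy--Schwarz on the simplex; and $\alpha\mapsto (K\alpha+1)^{-1}$ is strictly decreasing for $\alpha>0$. For $K=1$ the expression is identically zero, so we assume $K\ge 2$ for the strict claim.

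There is no real obstacle here. The only point needing care is to keep this statement separate from the normalized profiles $q_k$, which involve the ratio $u_{kb}/s_k$ and whose moments are not this simple. The theorem deliberately concerns the raw profile $u$, whose client mean is exactly $\pi$ because $\sum_kR_{kb}=1$. That makes the centering by $\pi$ deterministic, and the computation reduces entirely to a Dirichlet variance.
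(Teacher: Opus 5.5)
Your proof is correct and matches the paper's argument: you expand $\mathcal{H}_u$ termwise, write $u_{kb}-\pi_b=K\pi_b(R_{kb}-1/K)$ so that each term reduces to $K^2\pi_b^2\operatorname{Var}(R_{kb})$ via Lemma~\ref{lem:app-dirichlet-moments}, and then specialize to $\pi_b=1/B$. Your remark that strict monotonicity needs $K\ge 2$ and $\sum_b\pi_b^2>0$ makes explicit what the paper's derivative argument assumes without saying.
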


\begin{proof}
Using $\mathbb{E}[R_{kb}]=1/K$, linearity of expectation, and Lemma~\ref{lem:app-dirichlet-moments},
\begin{align}
    \mathbb{E}[\mathcal{H}_u]
    &=
    \sum_{b=1}^{B}
    \frac{1}{K}\sum_{k=1}^{K}
    \mathbb{E}
    [(K\pi_bR_{kb}-\pi_b)^2]\\
    &=
    \sum_{b=1}^{B}
    K^2\pi_b^2\operatorname{Var}(R_{kb})\\
    &=
    \frac{K-1}{K\alpha+1}
    \sum_{b=1}^{B}\pi_b^2.
\end{align}
For balanced bins,
$\sum_b\pi_b^2=B(1/B)^2=1/B$. Moreover,
\begin{equation}
    \frac{\mathrm{d}}{\mathrm{d}\alpha}
    \frac{K-1}{K\alpha+1}
    =
    -\frac{K(K-1)}{(K\alpha+1)^2}<0,
\end{equation}
which proves strict monotonicity.
\end{proof}

\begin{lemma}[Effect of client-mass normalization]
\label{lem:app-normalization}
If $|s_k-1|\leq\delta<1$ for all clients, then
\begin{equation}
    \|q_k-u_k\|_2\leq\delta
\end{equation}
and, under equal client weights,
\begin{equation}
    \left|
    \sqrt{\mathcal{H}_q}
    -
    \sqrt{\mathcal{H}_u}
    \right|
    \leq\delta.
    \label{eq:app-normalization}
\end{equation}
\end{lemma}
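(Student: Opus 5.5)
The plan is to prove the two inequalities in turn: the first by factoring out the client mass, the second by viewing $\sqrt{\mathcal{H}_q}$ and $\sqrt{\mathcal{H}_u}$ as norms of centered vectors in a product space.

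For the pointwise bound, the hypothesis $|s_k-1|\le\delta<1$ gives $s_k>0$, so $q_k=u_k/s_k$ in \eqref{eq:app-profiles} is well defined. From $u_k=s_kq_k$ we get
\begin{equation}
    q_k-u_k=(1-s_k)\,q_k .
\end{equation}
Hence $\|q_k-u_k\|_2=|1-s_k|\,\|q_k\|_2$. The entries $R_{kb}$ are nonnegative, so $u_{kb}\ge 0$ and $q_k$ lies in the simplex $\Delta^{B-1}$. Therefore $\|q_k\|_2\le\|q_k\|_1=1$, which yields $\|q_k-u_k\|_2\le\delta$.

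For \eqref{eq:app-normalization}, work in the product space $(\mathbb{R}^B)^K$ with the equal-weight norm $\|X\|^2=\frac1K\sum_k\|x_k\|_2^2$. Let $P$ be the centering map $(x_k)_k\mapsto(x_k-\frac1K\sum_jx_j)_k$. This map is the orthogonal projection onto the subspace orthogonal to constant stacks, so it is linear and nonexpansive.

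Under equal weights $\bar q=\frac1K\sum_kq_k$. By the identity stated just before Lemma~\ref{lem:app-dirichlet-moments}, $\frac1K\sum_ku_k=\pi$ exactly. Consequently $\sqrt{\mathcal{H}_q}=\|Pq\|$ and $\sqrt{\mathcal{H}_u}=\|Pu\|$, where $q=(q_k)_k$ and $u=(u_k)_k$. The reverse triangle inequality, linearity of $P$ and nonexpansiveness then give
\begin{equation}
    \bigl|\|Pq\|-\|Pu\|\bigr|
    \le\|P(q-u)\|
    \le\|q-u\|
    =\Bigl(\tfrac1K\sum_k\|q_k-u_k\|_2^2\Bigr)^{1/2}.
\end{equation}
By the first part, each term in the last sum is at most $\delta^2$, so the right-hand side is at most $\delta$.

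The only point needing care is the identification $\pi=\frac1K\sum_ku_k$. This makes $\mathcal{H}_u$ a genuine centered dispersion, which lets both heterogeneities be written as $\|P\cdot\|$ rather than being compared term by term around different centers. Everything else is elementary.
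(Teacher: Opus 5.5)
Your proof is correct and follows essentially the same route as the paper. You bound $\|q_k-u_k\|_2$ by factoring out the client mass and using $\|\cdot\|_2\le\|\cdot\|_1$; you factor $(1-s_k)q_k$ where the paper factors $(1/s_k-1)u_k$ with $\|u_k\|_1=s_k$, which is the same computation. You then apply the reverse triangle inequality and nonexpansiveness of the client-centering projection in the equal-weight stacked norm, as the paper does, and you additionally make explicit two details it leaves implicit: that $\delta<1$ forces $s_k>0$, and that $\frac1K\sum_k u_k=\pi$ makes $\mathcal{H}_u$ a genuinely centered dispersion.
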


\begin{proof}
Since $u_k$ is nonnegative and $\|u_k\|_1=s_k$,
\begin{align}
    \|q_k-u_k\|_2
    &=
    \left|\frac{1}{s_k}-1\right|\|u_k\|_2\\
    &\leq
    \left|\frac{1}{s_k}-1\right|\|u_k\|_1\\
    &=
    |1-s_k|
    \leq\delta.
\end{align}
The inequality $\|x\|_2\leq\|x\|_1$ was used. Let
$\mathsf{C}(x)_k=x_k-K^{-1}\sum_jx_j$ be client centering and equip stacked vectors with
$\|x\|_p^2=K^{-1}\sum_k\|x_k\|_2^2$.
Centering is an orthogonal projection and is nonexpansive. The reverse triangle inequality therefore gives
\begin{align}
    \left|
    \|\mathsf{C}(q)\|_p-\|\mathsf{C}(u)\|_p
    \right|
    &\leq
    \|\mathsf{C}(q-u)\|_p\\
    &\leq
    \|q-u\|_p
    \leq\delta.
\end{align}
The two centered squared norms are $\mathcal{H}_q$ and $\mathcal{H}_u$, respectively.
\end{proof}

Theorem~\ref{thm:app-alpha-law} is exact for the raw allocation. Lemma~\ref{lem:app-normalization} specifies when this result transfers to the normalized client distributions. In finite experiments, the realized value of $\mathcal{H}_q$ must be reported because minimum-size repair can substantially modify extreme allocations.

\section{Solution-Geometry Transport}
\label{app:transport}

Let $C\in\mathbb{R}_+^{B\times B}$ be a solution-geometry-aware cost between solution bins, with $C_{bb}=0$. In the experiments, $C_{bc}$ is the Euclidean distance between globally normalized solution centroids, as defined in \eqref{eq:centroid-cost}; it is therefore not assumed to encode an equation-specific energy norm or conservation law. Define
\begin{equation}
    W_C(q,q')
    =
    \min_{\Gamma\in\Pi(q,q')}
    \sum_{b=1}^{B}\sum_{c=1}^{B}
    C_{bc}\Gamma_{bc},
    \label{eq:app-transport}
\end{equation}
where $\Pi(q,q')$ is the set of couplings with marginals $q$ and $q'$. Let
\begin{equation}
    c_{\min}=\min_{b\neq c}C_{bc}>0,
    \qquad
    c_{\max}=\max_{b,c}C_{bc}<\infty.
\end{equation}

\begin{lemma}[Transport--norm equivalence]
\label{lem:app-transport}
For $q,q'\in\Delta^{B-1}$,
\begin{equation}
    c_{\min}\operatorname{TV}(q,q')
    \leq
    W_C(q,q')
    \leq
    c_{\max}\operatorname{TV}(q,q'),
    \label{eq:app-tv}
\end{equation}
where $\operatorname{TV}(q,q')=\|q-q'\|_1/2$. Consequently,
\begin{equation}
    \frac{c_{\min}}{2}\|q-q'\|_2
    \leq
    W_C(q,q')
    \leq
    \frac{c_{\max}\sqrt{B}}{2}\|q-q'\|_2.
    \label{eq:app-transport-l2}
\end{equation}
\end{lemma}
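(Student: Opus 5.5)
The approach is to compare every coupling with the identity coupling. Let $m=\min(q,q')$ componentwise, and let $\mu=\sum_b m_b=1-\operatorname{TV}(q,q')$ be the mass the two histograms share. The two standard facts for simplex vectors are $\sum_b(q_b-q'_b)_+=\sum_b(q'_b-q_b)_+=\operatorname{TV}(q,q')$ and $\|q-q'\|_1=2\operatorname{TV}(q,q')$.

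For the lower bound in \eqref{eq:app-tv}, take any $\Gamma\in\Pi(q,q')$. Since $\Gamma\ge0$, each diagonal entry satisfies $\Gamma_{bb}\le\min(\sum_c\Gamma_{bc},\sum_c\Gamma_{cb})=m_b$. The total off-diagonal mass is therefore at least $1-\mu=\operatorname{TV}(q,q')$. Every off-diagonal entry costs at least $c_{\min}$ and diagonal entries cost $0$, so $\langle C,\Gamma\rangle\ge c_{\min}\operatorname{TV}(q,q')$. Minimizing over $\Gamma$ gives the bound.

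For the upper bound, I will exhibit one feasible coupling. If $\operatorname{TV}=0$, take $\Gamma=\operatorname{diag}(q)$, which has zero cost. Otherwise set
\[
\Gamma=\operatorname{diag}(m)+\frac{(q-m)(q'-m)^\top}{\operatorname{TV}(q,q')}.
\]
Both $q-m=(q-q')_+$ and $q'-m=(q'-q)_+$ are nonnegative and each has total mass $\operatorname{TV}$. This makes the marginals correct: the row sums are $m+(q-m)=q$ and the column sums are $m+(q'-m)=q'$. The supports of $(q-q')_+$ and $(q'-q)_+$ are disjoint, so the rank-one part has no diagonal entries. Its total mass is $\operatorname{TV}$, and each entry costs at most $c_{\max}$, so $W_C\le c_{\max}\operatorname{TV}$. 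This construction is the only step that needs any real care, and the main risk is mishandling the degenerate case $\operatorname{TV}=0$, which the identity coupling settles.

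For \eqref{eq:app-transport-l2}, substitute $\operatorname{TV}=\|q-q'\|_1/2$. The norm inequalities $\|x\|_2\le\|x\|_1\le\sqrt{B}\|x\|_2$ on $\mathbb{R}^B$ then give
\[
W_C\ge\frac{c_{\min}}{2}\|q-q'\|_1\ge\frac{c_{\min}}{2}\|q-q'\|_2,
\qquad
W_C\le\frac{c_{\max}}{2}\|q-q'\|_1\le\frac{c_{\max}\sqrt{B}}{2}\|q-q'\|_2.
\]
Positivity and finiteness of $c_{\min},c_{\max}$ are used only so that these constants are meaningful.
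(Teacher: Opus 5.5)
Your proof is correct and follows the same route as the paper. The lower bound comes from every coupling having to place at least $\operatorname{TV}(q,q')$ mass off the diagonal, the upper bound from keeping $\min(q_b,q'_b)$ on the diagonal and shipping the surplus to the deficits, and the final step from the $\ell_1$--$\ell_2$ norm inequalities; you simply make the paper's two sketched steps explicit, via the bound $\Gamma_{bb}\le\min(q_b,q'_b)$ and the product coupling $(q-q')_+(q'-q)_+^\top/\operatorname{TV}(q,q')$ together with the $\operatorname{TV}=0$ case.
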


\begin{proof}
Every coupling must move at least
$\operatorname{TV}(q,q')$ probability mass off the diagonal. Since each off-diagonal unit costs at least $c_{\min}$, the lower bound in \eqref{eq:app-tv} follows. For the upper bound, place the common mass $\min(q_b,q'_b)$ on the diagonal and transport the remaining surplus to the deficits. The moved mass is exactly $\operatorname{TV}(q,q')$, and each unit costs at most $c_{\max}$.

For $x=q-q'$, the norm inequalities
\begin{equation}
    \|x\|_2\leq\|x\|_1\leq\sqrt{B}\|x\|_2
\end{equation}
hold. The second is Cauchy--Schwarz:
$\sum_b|x_b|\leq(\sum_b1^2)^{1/2}(\sum_bx_b^2)^{1/2}$.
Substitution into \eqref{eq:app-tv} proves
\eqref{eq:app-transport-l2}.
\end{proof}

Define
\begin{equation}
    \mathcal{H}_W
    =
    \sum_{k=1}^{K}p_kW_C(q_k,\bar q)^2.
\end{equation}
Squaring \eqref{eq:app-transport-l2}, multiplying by $p_k$, and summing yields
\begin{equation}
    \frac{c_{\min}^2}{4}\mathcal{H}_q
    \leq
    \mathcal{H}_W
    \leq
    \frac{Bc_{\max}^2}{4}\mathcal{H}_q.
    \label{eq:app-HW-Hq}
\end{equation}
Thus, the centroid cost converts categorical bin imbalance into a solution-space displacement while preserving the dependence on the concentration up to explicit constants.

\section{Gradient Heterogeneity Induced by Solution Bins}
\label{app:gradient}

Let
\begin{equation}
    g_b(\theta)=\nabla F_b(\theta),\qquad
    G(\theta)=[g_1(\theta),\ldots,g_B(\theta)].
\end{equation}
Differentiating \eqref{eq:app-client-mixture} gives
\begin{equation}
    g_k(\theta)-g(\theta)
    =
    G(\theta)(q_k-\bar q).
    \label{eq:app-gradient-identity}
\end{equation}
Define
\begin{equation}
    \mathcal{H}_g(\theta)
    =
    \sum_{k=1}^{K}p_k
    \|g_k(\theta)-g(\theta)\|_2^2.
    \label{eq:app-Hg}
\end{equation}

\begin{assumption}[Restricted gradient identifiability]
\label{ass:app-gradient}
There are constants $0<m_g(\theta)\leq M_g(\theta)$ such that
\begin{equation}
    m_g\|v\|_2
    \leq
    \|G(\theta)v\|_2
    \leq
    M_g\|v\|_2
    \label{eq:app-gradient-restricted}
\end{equation}
for every $v$ satisfying $\mathbf{1}^\top v=0$.
\end{assumption}

The upper inequality holds with $M_g=\|G\|_{\mathrm{op}}$. The requirement $m_g>0$ is substantive because different solution-bin mixtures must induce distinguishable gradients.

\begin{theorem}[Bin-to-gradient transfer]
\label{thm:app-gradient}
Under Assumption~\ref{ass:app-gradient},
\begin{equation}
    m_g^2\mathcal{H}_q
    \leq
    \mathcal{H}_g
    \leq
    M_g^2\mathcal{H}_q.
    \label{eq:app-gradient-sandwich}
\end{equation}
\end{theorem}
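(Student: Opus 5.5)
The plan is to apply Assumption~\ref{ass:app-gradient} termwise to the gradient identity \eqref{eq:app-gradient-identity} and then average over clients with the FedAvg weights $p_k$. The steps are as follows.

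First, check that each vector $v_k:=q_k-\bar q$ lies in the zero-sum subspace where the assumption applies. Since $q_k\in\Delta^{B-1}$, we have $\mathbf{1}^\top q_k=1$. Because the weights satisfy $\sum_kp_k=1$, also $\mathbf{1}^\top\bar q=\sum_kp_k\mathbf{1}^\top q_k=1$. Hence $\mathbf{1}^\top v_k=0$ for every $k$.

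This is the only point that needs care. The restricted lower bound $m_g$ is only available on this subspace. It also explains why the weights used to define $\bar q$ must be the same $p_k$ that define $g=\sum_kp_kg_k$. With matching weights, linearity of the mixture \eqref{eq:app-client-mixture} gives
\begin{equation*}
g=\sum_kp_kG q_k=G\bar q,
\end{equation*}
which is exactly what makes \eqref{eq:app-gradient-identity} valid.

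Second, combine \eqref{eq:app-gradient-identity} with \eqref{eq:app-gradient-restricted}. At a fixed $\theta$ this yields
\begin{equation*}
m_g^2\|q_k-\bar q\|_2^2\le\|g_k-g\|_2^2=\|G(q_k-\bar q)\|_2^2\le M_g^2\|q_k-\bar q\|_2^2,
\end{equation*}
where squaring preserves the inequalities because all quantities are nonnegative.

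Third, multiply each inequality by $p_k\ge0$ and sum over $k$. Using the definitions \eqref{eq:app-Hq} and \eqref{eq:app-Hg}, this gives
\begin{equation*}
m_g^2\mathcal{H}_q\le\mathcal{H}_g\le M_g^2\mathcal{H}_q,
\end{equation*}
which is \eqref{eq:app-gradient-sandwich}.

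I do not expect a real obstacle here; the argument is a direct consequence of the assumption. The substantive content lies in the assumption itself, particularly the requirement $m_g>0$. As remarked after the assumption, the upper bound alone holds with $M_g=\|G\|_{\mathrm{op}}$, even without that requirement.
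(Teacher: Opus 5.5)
Your proof is correct and matches the paper's argument step for step. You check that $q_k-\bar q$ lies in the zero-sum subspace, apply Assumption~\ref{ass:app-gradient} through identity \eqref{eq:app-gradient-identity}, square, multiply by $p_k\ge 0$, and sum, and your remark that $\bar q$ and $g$ must use the same weights (so that $g=G\bar q$) spells out a point the paper leaves implicit.
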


\begin{proof}
Both $q_k$ and $\bar q$ are probability vectors, so
$\mathbf{1}^\top(q_k-\bar q)=0$. Apply
\eqref{eq:app-gradient-restricted} to $q_k-\bar q$, square the inequalities, multiply by $p_k\geq0$, and sum over $k$. Identity \eqref{eq:app-gradient-identity} then gives the result.
\end{proof}

Combining \eqref{eq:app-HW-Hq} and
\eqref{eq:app-gradient-sandwich},
\begin{equation}
    \frac{4m_g^2}{Bc_{\max}^2}\mathcal{H}_W
    \leq
    \mathcal{H}_g
    \leq
    \frac{4M_g^2}{c_{\min}^2}\mathcal{H}_W.
    \label{eq:app-physical-gradient}
\end{equation}
This result formalizes the empirical relation between physical solution distance and gradient heterogeneity. If the lower bound fails, the selected bins do not preserve solution geometry that is relevant to the gradients.

\section{Local SGD Update Dispersion}
\label{app:local}

Consider one round that starts from a common parameter $\theta$. First, consider $E$ local full-gradient steps:
\begin{equation}
    \theta_{k,0}=\theta,\qquad
    \theta_{k,e+1}
    =
    \theta_{k,e}-\eta\nabla F_k(\theta_{k,e}).
    \label{eq:app-local-gd}
\end{equation}
Let $\bar\theta_E=\sum_kp_k\theta_{k,E}$ and
\begin{equation}
    \mathcal{D}_{\mathrm{loc}}
    =
    \sum_{k=1}^{K}p_k
    \|\theta_{k,E}-\bar\theta_E\|_2^2.
    \label{eq:app-local-dispersion}
\end{equation}

\begin{assumption}[Local regularity]
\label{ass:app-local}
Each $F_k$ is $L$-smooth along the local paths, and
$\|\nabla F_k(x)\|_2\leq G_0$ there.
\end{assumption}

\begin{theorem}[First-order local-drift law]
\label{thm:app-local}
Under Assumption~\ref{ass:app-local}, with
\begin{equation}
    R_E
    =
    \frac{\eta^2LG_0E(E-1)}{2},
\end{equation}
we have
\begin{equation}
    \left|
    \sqrt{\mathcal{D}_{\mathrm{loc}}}
    -
    \eta E\sqrt{\mathcal{H}_g(\theta)}
    \right|
    \leq R_E.
    \label{eq:app-local-bound}
\end{equation}
Equivalently,
\begin{equation}
    (\eta E\sqrt{\mathcal{H}_g}-R_E)_+^2
    \leq
    \mathcal{D}_{\mathrm{loc}}
    \leq
    (\eta E\sqrt{\mathcal{H}_g}+R_E)^2.
\end{equation}
\end{theorem}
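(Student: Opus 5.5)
Write each local endpoint as the first-order displacement plus a remainder, and control the remainder with smoothness. Unrolling \eqref{eq:app-local-gd} gives
\[
\theta_{k,E}=\theta-\eta E g_k(\theta)-\eta\sum_{e=0}^{E-1}\bigl(\nabla F_k(\theta_{k,e})-\nabla F_k(\theta)\bigr)=\theta-\eta E g_k(\theta)+r_k .
\]
Here $g_k=\nabla F_k(\theta)$. By Assumption~\ref{ass:app-local}, each step has length at most $\eta G_0$, so $\|\theta_{k,e}-\theta\|_2\le e\eta G_0$. $L$-smoothness along the path then gives $\|\nabla F_k(\theta_{k,e})-\nabla F_k(\theta)\|_2\le L e\eta G_0$. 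Summing over $e=0,\dots,E-1$ yields
\[
\|r_k\|_2\le \eta^2 LG_0\sum_{e=0}^{E-1}e=\eta^2LG_0E(E-1)/2=R_E .
\]

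Next, work in the weighted stacked space with inner norm $\|x\|_p^2=\sum_kp_k\|x_k\|_2^2$, as in the proof of Lemma~\ref{lem:app-normalization} but now with weights $p_k$. Let $\mathsf{C}$ denote weighted centering, $\mathsf{C}(x)_k=x_k-\sum_jp_jx_j$. It is the orthogonal projection onto the complement of constant stacks in this weighted inner product, so it is nonexpansive. The common term $\theta$ is annihilated by centering. Hence
\[
\sqrt{\mathcal{D}_{\mathrm{loc}}}=\|\mathsf{C}(-\eta E g+r)\|_p,\qquad
\|\mathsf{C}(-\eta E g)\|_p=\eta E\sqrt{\mathcal{H}_g(\theta)},
\]
using $g=\sum_kp_kg_k$ and definition \eqref{eq:app-Hg}. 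The reverse triangle inequality then gives
\[
\bigl|\sqrt{\mathcal{D}_{\mathrm{loc}}}-\eta E\sqrt{\mathcal{H}_g}\bigr|\le\|\mathsf{C}(r)\|_p\le\|r\|_p\le\max_k\|r_k\|_2\le R_E,
\]
where the last-but-one step uses $\sum_kp_k=1$. This is \eqref{eq:app-local-bound}. The equivalent two-sided squared form follows by squaring. On the lower side, take the positive part first, since $\sqrt{\mathcal{D}_{\mathrm{loc}}}\ge0$.

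The main obstacle is the remainder bound. It needs two things along the actual local paths: the iterates must stay in the region where smoothness and the gradient bound hold, and the displacement estimate $\|\theta_{k,e}-\theta\|_2\le e\eta G_0$ must be valid there. The phrase ``along the local paths'' in the assumption is exactly what licenses this, so I would make that use explicit. The other point to check is that weighted centering is a genuine orthogonal projection, which is what justifies dropping $\mathsf{C}$ on the remainder. This holds because centering subtracts the $p$-weighted mean, and that mean is the orthogonal projection onto constants in the $p$-weighted inner product.
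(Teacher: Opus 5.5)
Your proposal is correct and follows the paper's proof essentially step for step. It uses the same unrolling $\theta_{k,E}=\theta-\eta E g_k(\theta)+r_k$, the same remainder bound $\|r_k\|_2\le \eta L\sum_{e=0}^{E-1}\eta e G_0=R_E$, and the same reverse-triangle-plus-nonexpansive-centering argument in the stacked norm. Your additions are correct clarifications of steps the paper leaves implicit: you note that the centering must be the $p$-weighted one, whereas Lemma~\ref{lem:app-normalization} defines it with equal weights, and that $\|r\|_p\le\max_k\|r_k\|_2$ because $\sum_k p_k=1$.
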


\begin{proof}
Unrolling \eqref{eq:app-local-gd},
\begin{equation}
    \theta_{k,E}
    =
    \theta-\eta Eg_k(\theta)+r_{k,E},
\end{equation}
where
\begin{equation}
    r_{k,E}
    =
    -\eta\sum_{e=0}^{E-1}
    [\nabla F_k(\theta_{k,e})-\nabla F_k(\theta)].
\end{equation}
By the triangle inequality, smoothness, and the gradient bound,
\begin{align}
    \|r_{k,E}\|_2
    &\leq
    \eta L\sum_{e=0}^{E-1}
    \|\theta_{k,e}-\theta\|_2\\
    &\leq
    \eta L\sum_{e=0}^{E-1}\eta eG_0\\
    &=
    \frac{\eta^2LG_0E(E-1)}{2}.
\end{align}
The second inequality follows from
\begin{equation}
    \|\theta_{k,e}-\theta\|_2
    \leq
    \eta\sum_{j=0}^{e-1}
    \|\nabla F_k(\theta_{k,j})\|_2
    \leq
    \eta eG_0.
\end{equation}
Apply the weighted centering operator $\mathsf{C}$ from Lemma~\ref{lem:app-normalization}. The reverse triangle inequality and nonexpansiveness of this orthogonal projection give
\begin{align}
    \left|
    \sqrt{\mathcal{D}_{\mathrm{loc}}}
    -
    \eta E\sqrt{\mathcal{H}_g}
    \right|
    &\leq
    \|\mathsf{C}(r_E)\|_p\\
    &\leq
    \|r_E\|_p
    \leq R_E.
\end{align}
\end{proof}

Hence,
\begin{equation}
    \mathcal{D}_{\mathrm{loc}}
    =
    \eta^2E^2\mathcal{H}_g+O(\eta^3E^3).
    \label{eq:app-local-leading}
\end{equation}
For mini-batch SGD, write
\begin{equation}
    \widehat g_{k,e}
    =
    \nabla F_k(\theta_{k,e})+\xi_{k,e},
\end{equation}
with
\begin{equation}
    \mathbb{E}[\xi_{k,e}\mid\mathcal{F}_{k,e}]=0,\qquad
    \mathbb{E}[\|\xi_{k,e}\|_2^2\mid\mathcal{F}_{k,e}]
    \leq\sigma_k^2.
\end{equation}
The martingale-difference property and the law of iterated expectations eliminate the cross-step noise terms:
\begin{equation}
    \mathbb{E}
    \left\|
    \sum_{e=0}^{E-1}\xi_{k,e}
    \right\|_2^2
    =
    \sum_{e=0}^{E-1}\mathbb{E}\|\xi_{k,e}\|_2^2
    \leq E\sigma_k^2.
\end{equation}
The Minkowski inequality in $L_2$ and weighted centering then yield
\begin{equation}
    \left|
    \sqrt{\mathbb{E}\mathcal{D}_{\mathrm{loc}}^{\mathrm{SGD}}}
    -
    \eta E\sqrt{\mathcal{H}_g}
    \right|
    \leq
    R_E+
    \eta\sqrt{E\sum_kp_k\sigma_k^2},
    \label{eq:app-sgd}
\end{equation}
provided that the bounded-gradient condition also holds along the stochastic paths. SGD is therefore a convenient optimizer for the theoretical analysis because partition heterogeneity and sampling noise appear as separate terms. The analysis of adaptive methods requires additional assumptions about client-dependent preconditioners.

\section{FedAvg--Centralized Trajectory Drift}
\label{app:trajectory}

Local dispersion differs from drift in the aggregated model. Consider two procedures that start from the same $\theta$. In the first procedure, each client performs $E$ full-gradient steps before weighted FedAvg aggregation, which produces $\theta_{\mathrm{FA}}^+$. In the second procedure, the centralized comparator performs $E$ full-gradient steps on $F=\sum_kp_kF_k$, which produces $\theta_{\mathrm{C}}^+$.

\begin{assumption}[Second-order regularity]
\label{ass:app-second}
Each $F_k$ is twice continuously differentiable. Along all relevant paths,
\begin{align}
    \|\nabla^2F_k(x)\|_{\mathrm{op}}
    &\leq L,\\
    \|\nabla^2F_k(x)-\nabla^2F_k(y)\|_{\mathrm{op}}
    &\leq\rho\|x-y\|_2.
\end{align}
The gradients are also bounded by $G_0$.
\end{assumption}

Let
\begin{equation}
    H_k=\nabla^2F_k(\theta),\qquad
    H=\sum_kp_kH_k,\qquad
    c_E=\frac{E(E-1)}{2}.
\end{equation}

\begin{theorem}[Second-order trajectory drift]
\label{thm:app-trajectory}
Under Assumption~\ref{ass:app-second}, for fixed $E$ and sufficiently small $\eta$,
\begin{equation}
    \theta_{\mathrm{FA}}^+-\theta_{\mathrm{C}}^+
    =
    \eta^2c_E
    \sum_{k=1}^{K}p_k(H_k-H)(g_k-g)
    +
    O(\eta^3E^3).
    \label{eq:app-second-drift}
\end{equation}
For $E=1$, full-participation FedAvg and the centralized full-gradient step are exactly equal.
\end{theorem}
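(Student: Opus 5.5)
The plan is a second-order Taylor expansion of both trajectories around the common starting point $\theta$, truncated after the $\eta^2$ terms, with the remainders controlled by Assumption~\ref{ass:app-second}.

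\textbf{Step 1: expand each local client trajectory.} Write $\theta_{k,e}=\theta+\delta_{k,e}$. From the proof of Theorem~\ref{thm:app-local}, $\|\delta_{k,e}\|_2\le \eta eG_0$. Taylor's theorem with the Hessian Lipschitz constant $\rho$ gives
\begin{equation*}
\nabla F_k(\theta_{k,e})=g_k+H_k\delta_{k,e}+\epsilon_{k,e},\qquad \|\epsilon_{k,e}\|_2\le \tfrac{\rho}{2}\|\delta_{k,e}\|_2^2\le \tfrac{\rho}{2}\eta^2e^2G_0^2 .
\end{equation*}
Refine $\delta_{k,e}=-\eta e g_k+s_{k,e}$. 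Exactly as for $r_{k,E}$ in Theorem~\ref{thm:app-local}, one gets $\|s_{k,e}\|_2\le \eta^2 L G_0 e(e-1)/2$. Substituting into the update gives
\begin{equation*}
\theta_{k,E}=\theta-\eta E g_k+\eta^2\Big(\sum_{e=0}^{E-1}e\Big)H_kg_k+\tilde r_k=\theta-\eta Eg_k+\eta^2c_EH_kg_k+\tilde r_k .
\end{equation*}
The remainder is $\tilde r_k=-\eta\sum_e(H_ks_{k,e}+\epsilon_{k,e})$. Its norm is bounded by $\eta^3\sum_e\big(L^2G_0e^2/2+\rho G_0^2e^2/2\big)=O(\eta^3E^3)$, with constants depending on $L,\rho,G_0$.

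\textbf{Step 2: expand the centralized trajectory and aggregate.} The same computation applies to $F$, whose Hessian at $\theta$ is $H$ and whose gradient is $g$. Its Hessian satisfies the same bounds $L,\rho$ by convexity of norms, since $p_k\ge0$ and $\sum_kp_k=1$. This gives
\begin{equation*}
\theta_{\mathrm C}^+=\theta-\eta Eg+\eta^2c_EHg+O(\eta^3E^3).
\end{equation*}
Averaging Step 1 with weights $p_k$, and using $\sum_kp_kg_k=g$, the first-order terms cancel:
\begin{equation*}
\theta_{\mathrm{FA}}^+-\theta_{\mathrm C}^+=\eta^2c_E\Big(\sum_kp_kH_kg_k-Hg\Big)+O(\eta^3E^3).
\end{equation*}
The bracket is exactly the covariance $\sum_kp_k(H_k-H)(g_k-g)$. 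To see this, expand it as
\begin{equation*}
\sum_kp_kH_kg_k-Hg-Hg+Hg ,
\end{equation*}
which uses $\sum_kp_kH_k=H$ and $\sum_kp_kg_k=g$. This proves \eqref{eq:app-second-drift}.

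\textbf{Step 3: the case $E=1$, and the main obstacle.} For $E=1$ no expansion is needed:
\begin{equation*}
\sum_kp_k(\theta-\eta g_k)=\theta-\eta g=\theta_{\mathrm C}^+
\end{equation*}
exactly, which also matches $c_1=0$.

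The main obstacle is bookkeeping of the remainder. I must verify that every discarded term is genuinely third order in the product $\eta E$, uniformly over the $E$ steps. This needs the iterates to stay in the region where Assumption~\ref{ass:app-second} holds, which is what ``sufficiently small $\eta$'' buys. The key point is that the cross term $H_ks_{k,e}$ and the Taylor error $\epsilon_{k,e}$ each contribute $\eta\cdot\eta^2e^2$, which sums over $e<E$ to $O(\eta^3E^3)$. Ignoring their sum over steps would falsely suggest an $O(\eta^3E^2)$ remainder.
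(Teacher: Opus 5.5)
Your proposal is correct and follows essentially the same route as the paper: you Taylor-expand each client gradient with the $\rho$-Lipschitz Hessian remainder, substitute the first-order displacement $-\eta e g_k$ to obtain the $\eta^2 c_E H_k g_k$ term, apply the identical expansion to the centralized objective, cancel the first-order terms, and finish with the covariance identity. Your version is more explicit than the paper's: you bound the remainder $-\eta\sum_e(H_ks_{k,e}+\epsilon_{k,e})$ with concrete constants, and you check the $E=1$ equality by direct computation, which is stronger than inferring it from $c_1=0$ in an expansion that still carries an $O(\eta^3)$ remainder.
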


\begin{proof}
The Taylor theorem with an integral remainder gives
\begin{equation}
    \nabla F_k(\theta+d)
    =
    g_k+H_kd+r_k(d),
\end{equation}
where
\begin{equation}
    \|r_k(d)\|_2
    \leq
    \frac{\rho}{2}\|d\|_2^2.
    \label{eq:app-taylor}
\end{equation}
Indeed, write the gradient increment as
$\int_0^1\nabla^2F_k(\theta+\tau d)d\,\mathrm{d}\tau$,
subtract $H_kd$, apply the Lipschitz continuity of the Hessian, and evaluate
$\int_0^1\rho\tau\|d\|_2^2\,\mathrm{d}\tau$.

Induction over the local steps, using
$\|\theta_{k,e}-\theta\|_2=O(\eta eG_0)$, gives
\begin{equation}
    \theta_{k,E}
    =
    \theta-\eta Eg_k
    +
    \eta^2c_EH_kg_k
    +
    O(\eta^3E^3).
    \label{eq:app-local-second}
\end{equation}
The coefficient $c_E=\sum_{e=0}^{E-1}e$ arises because the first-order displacement at step $e$ is $-\eta e g_k$; substituting this displacement into the Hessian term contributes $+\eta^2eH_kg_k$. The order of the remainder follows from \eqref{eq:app-taylor} and repeated application of the triangle inequality.

Averaging \eqref{eq:app-local-second},
\begin{equation}
    \theta_{\mathrm{FA}}^+
    =
    \theta-\eta Eg
    +
    \eta^2c_E\sum_kp_kH_kg_k
    +
    O(\eta^3E^3).
\end{equation}
Applying the same expansion to the centralized objective,
\begin{equation}
    \theta_{\mathrm{C}}^+
    =
    \theta-\eta Eg+\eta^2c_EHg+O(\eta^3E^3).
\end{equation}
Subtracting cancels the common first-order term. Finally,
\begin{equation}
    \sum_kp_k(H_k-H)(g_k-g)
    =
    \sum_kp_kH_kg_k-Hg,
\end{equation}
because $\sum_kp_kH_k=H$ and $\sum_kp_kg_k=g$.
\end{proof}

Define curvature heterogeneity
\begin{equation}
    \mathcal{H}_H
    =
    \sum_kp_k\|H_k-H\|_{\mathrm{op}}^2.
\end{equation}
Submultiplicativity, the triangle inequality, and weighted Cauchy--Schwarz imply
\begin{align}
    \left\|
    \sum_kp_k(H_k-H)(g_k-g)
    \right\|_2
    &\leq
    \sum_kp_k
    \|H_k-H\|_{\mathrm{op}}\|g_k-g\|_2\\
    &\leq
    \sqrt{\mathcal{H}_H\mathcal{H}_g}.
    \label{eq:app-drift-upper}
\end{align}
Therefore,
\begin{equation}
    \|\theta_{\mathrm{FA}}^+-\theta_{\mathrm{C}}^+\|_2
    \leq
    \eta^2c_E\sqrt{\mathcal{H}_H\mathcal{H}_g}
    +
    O(\eta^3E^3).
    \label{eq:app-drift-upper-final}
\end{equation}

A lower bound requires the relevant terms not to cancel.

\begin{assumption}[Curvature--gradient alignment]
\label{ass:app-alignment}
There is a constant $\kappa>0$ such that
\begin{equation}
    \left\|
    \sum_kp_k(H_k-H)(g_k-g)
    \right\|_2
    \geq
    \kappa\mathcal{H}_q
    \label{eq:app-alignment}
\end{equation}
in the parameter region of interest.
\end{assumption}

\begin{corollary}[Conditional concentration dependence]
\label{cor:app-alpha-drift}
Under Assumptions~\ref{ass:app-gradient},
~\ref{ass:app-second}, and~\ref{ass:app-alignment}, and under the balanced-mass population approximation,
\begin{equation}
    \|\theta_{\mathrm{FA}}^+-\theta_{\mathrm{C}}^+\|_2
    =
    \Omega\!\left(
    \eta^2E(E-1)\mathcal{H}_q
    \right)
\end{equation}
as $\eta\rightarrow0$. Furthermore,
\begin{equation}
    \mathbb{E}
    \|\theta_{\mathrm{FA}}^+-\theta_{\mathrm{C}}^+\|_2^2
    =
    \Omega\!\left(
    \frac{\eta^4E^2(E-1)^2}
    {(K\alpha+1)^2}
    \right),
    \label{eq:app-alpha-drift}
\end{equation}
up to constants determined by the bin proportions and alignment.
\end{corollary}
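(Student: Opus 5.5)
The plan has three steps: combine the second-order expansion of Theorem~\ref{thm:app-trajectory} with the alignment lower bound of Assumption~\ref{ass:app-alignment}, then take expectations and use the concentration law of Theorem~\ref{thm:app-alpha-law}, transferred to normalized profiles through Lemma~\ref{lem:app-normalization}.

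\textbf{Step 1 (deterministic lower bound).} Write $\Delta=\theta_{\mathrm{FA}}^+-\theta_{\mathrm{C}}^+$ and $V=\sum_kp_k(H_k-H)(g_k-g)$. By Theorem~\ref{thm:app-trajectory}, $\Delta=\eta^2c_EV+\rho_\eta$ with $\|\rho_\eta\|_2\le C_3\eta^3E^3$. The constant $C_3$ depends only on $L,\rho,G_0$ through the Taylor remainder \eqref{eq:app-taylor}. The reverse triangle inequality and \eqref{eq:app-alignment} give
\begin{equation}
\|\Delta\|_2\ge\eta^2c_E\kappa\mathcal{H}_q-C_3\eta^3E^3 .
\end{equation}
For fixed $E$ and any fixed realization with $\mathcal{H}_q>0$, the first term dominates as $\eta\to0$. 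This gives $\|\Delta\|_2\ge\tfrac12\eta^2c_E\kappa\mathcal{H}_q$ eventually, which is the first $\Omega$ claim since $c_E=E(E-1)/2$. Assumption~\ref{ass:app-gradient} is used only to guarantee that alignment is not vacuous. Through Theorem~\ref{thm:app-gradient}, a positive $\mathcal{H}_q$ produces a positive $\mathcal{H}_g$, so $V$ can be nonzero.

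\textbf{Step 2 (squared expectation).} Square the inequality using $(a-b)_+^2\ge a^2-2ab$, then take expectations over the Dirichlet draw:
\begin{equation}
\mathbb{E}\|\Delta\|_2^2\ge\eta^4c_E^2\kappa^2\,\mathbb{E}[\mathcal{H}_q^2]-2C_3\kappa\eta^5c_EE^3\,\mathbb{E}[\mathcal{H}_q].
\end{equation}
Jensen gives $\mathbb{E}[\mathcal{H}_q^2]\ge(\mathbb{E}\mathcal{H}_q)^2$. Because $\mathcal{H}_q\le2$ is bounded, the remainder term is $O(\eta^5)$ and is negligible as $\eta\to0$.

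\textbf{Step 3 (concentration law).} Under the balanced-mass population approximation, take equal weights and $s_k\approx1$. Lemma~\ref{lem:app-normalization} then gives $\sqrt{\mathcal{H}_q}\ge\sqrt{\mathcal{H}_u}-\delta$. In the idealized regime, identify $\mathcal{H}_q$ with $\mathcal{H}_u$ up to a constant factor. Theorem~\ref{thm:app-alpha-law} then yields $\mathbb{E}\mathcal{H}_q\gtrsim (K-1)\sum_b\pi_b^2/(K\alpha+1)$. Substituting into Step 2 gives $\mathbb{E}\|\Delta\|_2^2=\Omega\big(\eta^4E^2(E-1)^2/(K\alpha+1)^2\big)$. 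The constant absorbs $\kappa^2$, $(K-1)^2$, and $(\sum_b\pi_b^2)^2$, as the statement allows.

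\textbf{Main obstacle.} The hard part is Step 3's passage from $\mathcal{H}_u$ to $\mathcal{H}_q$. A bound of the form $|s_k-1|\le\delta$ does not hold almost surely for small $\alpha$: client masses are heavily skewed, and the additive $\delta$ error could swamp $\mathcal{H}_u$. I would first lean on the stated ``population approximation'' and treat $\mathcal{H}_q\asymp\mathcal{H}_u$ as the modeling assumption. If that is not enough, I would restrict to the event $\{\max_k|s_k-1|\le\delta\}$ with $\delta$ a small fraction of $\sqrt{\mathcal{H}_u}$, and bound its probability below using Dirichlet concentration of $s_k$ for moderate $\alpha$.

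A secondary point is that Jensen's inequality is only needed in the $\mathbb{E}[\mathcal{H}_q^2]\ge(\mathbb{E}\mathcal{H}_q)^2$ direction, which is the direction that works. The bound therefore inherits the squared $(K\alpha+1)^{-2}$ rate directly.
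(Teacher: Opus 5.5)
Your proposal follows the paper's proof essentially step for step: insert the alignment bound into the second-order expansion of Theorem~\ref{thm:app-trajectory}, square and take expectations using Jensen's inequality $\mathbb{E}[\mathcal{H}_q^2]\ge(\mathbb{E}\mathcal{H}_q)^2$, and finish with Theorem~\ref{thm:app-alpha-law} and Lemma~\ref{lem:app-normalization}. You handle the cross term explicitly when squaring, which the paper does not. You also treat the $\mathcal{H}_u$-to-$\mathcal{H}_q$ transfer as a modeling approximation, exactly as the paper does when it concludes ``up to the error introduced by mass normalization.''
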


\begin{proof}
Insert \eqref{eq:app-alignment} into
\eqref{eq:app-second-drift}. The third-order term vanishes relative to the second-order term as $\eta\rightarrow0$. After squaring and taking the expectation, the Jensen inequality for the convex map $x\mapsto x^2$ gives
\begin{equation}
    \mathbb{E}[\mathcal{H}_q^2]
    \geq
    (\mathbb{E}[\mathcal{H}_q])^2.
\end{equation}
Theorem~\ref{thm:app-alpha-law} and
Lemma~\ref{lem:app-normalization} then yield
\eqref{eq:app-alpha-drift}, up to the error introduced by mass normalization.
\end{proof}

Assumption~\ref{ass:app-alignment} is essential. Without this assumption, the matrix--vector terms in \eqref{eq:app-second-drift} can cancel. Thus, a smaller value of $\alpha$ always increases the expected raw partition heterogeneity, but it does not necessarily induce monotonic signed global drift for every PDE, seed, and optimizer.

\section{From Parameter Drift to Signed Test-Error Drift}
\label{app:risk}

Let $\mathcal{R}$ denote common test risk and define
\begin{equation}
    \Delta_{\mathcal{R}}
    =
    \mathcal{R}(\theta_{\mathrm{FA}}^+)
    -
    \mathcal{R}(\theta_{\mathrm{C}}^+).
\end{equation}
A positive value indicates a higher test risk for FedAvg, whereas a negative value indicates a lower test risk in that run.

\begin{proposition}[Risk perturbation]
\label{prop:app-risk}
Suppose $\nabla\mathcal{R}$ is $L_{\mathcal{R}}$-Lipschitz and let
$\Delta_\theta=\theta_{\mathrm{FA}}^+-\theta_{\mathrm{C}}^+$. Then
\begin{equation}
    \left|
    \Delta_{\mathcal{R}}
    -
    \nabla\mathcal{R}(\theta_{\mathrm{C}}^+)^\top\Delta_\theta
    \right|
    \leq
    \frac{L_{\mathcal{R}}}{2}\|\Delta_\theta\|_2^2.
    \label{eq:app-risk-bound}
\end{equation}
If $\mathcal{R}$ is also locally $\mu_{\mathcal{R}}$-strongly convex,
\begin{equation}
    \Delta_{\mathcal{R}}
    \geq
    \nabla\mathcal{R}(\theta_{\mathrm{C}}^+)^\top\Delta_\theta
    +
    \frac{\mu_{\mathcal{R}}}{2}\|\Delta_\theta\|_2^2.
    \label{eq:app-risk-lower}
\end{equation}
\end{proposition}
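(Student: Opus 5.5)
The plan is to reduce both inequalities to one-dimensional statements along the segment joining $\theta_{\mathrm{C}}^+$ and $\theta_{\mathrm{FA}}^+$, in the spirit of the Taylor-with-integral-remainder argument used in the proof of Theorem~\ref{thm:app-trajectory}. Write $\Delta_\theta=\theta_{\mathrm{FA}}^+-\theta_{\mathrm{C}}^+$ and define the scalar function
\begin{equation}
    \phi(\tau)=\mathcal{R}(\theta_{\mathrm{C}}^++\tau\Delta_\theta),\qquad \tau\in[0,1],
\end{equation}
so that $\Delta_{\mathcal{R}}=\phi(1)-\phi(0)$. Since $\nabla\mathcal{R}$ is Lipschitz, it is continuous, so $\phi$ is continuously differentiable with $\phi'(\tau)=\nabla\mathcal{R}(\theta_{\mathrm{C}}^++\tau\Delta_\theta)^\top\Delta_\theta$. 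The fundamental theorem of calculus then gives the exact identity
\begin{equation}
    \Delta_{\mathcal{R}}-\nabla\mathcal{R}(\theta_{\mathrm{C}}^+)^\top\Delta_\theta
    =
    \int_0^1\bigl[\nabla\mathcal{R}(\theta_{\mathrm{C}}^++\tau\Delta_\theta)-\nabla\mathcal{R}(\theta_{\mathrm{C}}^+)\bigr]^\top\Delta_\theta\,\mathrm{d}\tau.
\end{equation}
This identity is the common starting point for both claims.

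For the upper bound \eqref{eq:app-risk-bound}, take absolute values inside the integral. Apply Cauchy--Schwarz to each integrand, then the Lipschitz bound $\|\nabla\mathcal{R}(\theta_{\mathrm{C}}^++\tau\Delta_\theta)-\nabla\mathcal{R}(\theta_{\mathrm{C}}^+)\|_2\leq L_{\mathcal{R}}\tau\|\Delta_\theta\|_2$. The remainder is then at most $\int_0^1 L_{\mathcal{R}}\tau\|\Delta_\theta\|_2^2\,\mathrm{d}\tau=\tfrac{L_{\mathcal{R}}}{2}\|\Delta_\theta\|_2^2$, which is exactly the stated bound. No convexity or second differentiability of $\mathcal{R}$ is needed here.

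For the lower bound \eqref{eq:app-risk-lower}, I will read ``locally strongly convex'' as holding on a convex neighbourhood containing the whole segment between the two parameters. I will use strong monotonicity of the gradient there: $(\nabla\mathcal{R}(x)-\nabla\mathcal{R}(y))^\top(x-y)\geq\mu_{\mathcal{R}}\|x-y\|_2^2$. Choosing $x=\theta_{\mathrm{C}}^++\tau\Delta_\theta$ and $y=\theta_{\mathrm{C}}^+$ and dividing by $\tau>0$ shows that the integrand is at least $\mu_{\mathcal{R}}\tau\|\Delta_\theta\|_2^2$. Integrating over $[0,1]$ yields the term $\tfrac{\mu_{\mathcal{R}}}{2}\|\Delta_\theta\|_2^2$. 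Alternatively, one can quote the standard first-order characterisation of $\mu$-strong convexity directly.

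The only real obstacle is this locality hypothesis. The argument requires strong convexity, or strong monotonicity of $\nabla\mathcal{R}$, along the entire segment, not merely at $\theta_{\mathrm{C}}^+$. I will therefore state this explicitly, and note that in the small-$\eta$ regime of Theorem~\ref{thm:app-trajectory} it is reasonable, because $\|\Delta_\theta\|_2=O(\eta^2E^2)$ keeps the segment inside any fixed neighbourhood of $\theta_{\mathrm{C}}^+$.
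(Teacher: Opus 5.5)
Your proof is correct and follows the paper's argument: the paper uses the same fundamental-theorem-of-calculus identity along the segment from $\theta_{\mathrm{C}}^+$ to $\theta_{\mathrm{FA}}^+$, then Cauchy--Schwarz and gradient Lipschitzness, to obtain the $\tfrac{L_{\mathcal{R}}}{2}\|\Delta_\theta\|_2^2$ bound. For the lower bound the paper simply cites the first-order characterization of strong convexity, whereas you also derive it by integrating strong monotonicity of $\nabla\mathcal{R}$ along that segment; your remark that local strong convexity must hold on the whole segment, not only at $\theta_{\mathrm{C}}^+$, is a useful clarification the paper leaves implicit.
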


\begin{proof}
The fundamental theorem of calculus gives
\begin{equation}
    \mathcal{R}(x+d)-\mathcal{R}(x)
    =
    \int_0^1\nabla\mathcal{R}(x+\tau d)^\top d\,\mathrm{d}\tau.
\end{equation}
Subtract $\nabla\mathcal{R}(x)^\top d$, apply Cauchy--Schwarz and gradient Lipschitzness, and integrate:
\begin{align}
    &
    \left|
    \mathcal{R}(x+d)-\mathcal{R}(x)
    -
    \nabla\mathcal{R}(x)^\top d
    \right|\\
    &\leq
    \int_0^1
    \|\nabla\mathcal{R}(x+\tau d)-\nabla\mathcal{R}(x)\|_2
    \|d\|_2\,\mathrm{d}\tau\\
    &\leq
    \int_0^1\tau L_{\mathcal{R}}\|d\|_2^2\,\mathrm{d}\tau
    =
    \frac{L_{\mathcal{R}}}{2}\|d\|_2^2.
\end{align}
Set $x=\theta_{\mathrm{C}}^+$ and $d=\Delta_\theta$.
Inequality \eqref{eq:app-risk-lower} is the defining first-order inequality of strong convexity.
\end{proof}

Near a stationary centralized comparator, for which
$\nabla\mathcal{R}(\theta_{\mathrm{C}}^+)\approx0$, local strong convexity ensures that the excess risk is nonnegative and proportional to $\|\Delta_\theta\|_2^2$. Away from stationarity, the linear term in \eqref{eq:app-risk-bound} can have either sign. Negative empirical test-error drift therefore does not contradict the theoretical results for partitioning, transport, or gradient heterogeneity.

\section{Exact Two-Bin Quadratic Witness}
\label{app:quadratic}

Consider a scalar parameter and two solution bins,
\begin{equation}
    F_1(\theta)=\frac{h_1}{2}(\theta-a_1)^2,\qquad
    F_2(\theta)=\frac{h_2}{2}(\theta-a_2)^2,
\end{equation}
with $h_1,h_2>0$. Client $k$ has bin-1 mass $q_k$ and
\begin{equation}
    F_k=q_kF_1+(1-q_k)F_2.
\end{equation}
At the common starting point,
\begin{align}
    h_k&=h_2+q_k\Delta h,\\
    g_k&=g_2+q_k\Delta g,
\end{align}
where
\begin{equation}
    \Delta h=h_1-h_2,\qquad
    \Delta g=h_1(\theta-a_1)-h_2(\theta-a_2).
\end{equation}
Because the objectives are quadratic, two local steps are exact:
\begin{equation}
    \theta_{k,2}
    =
    \theta-2\eta g_k+\eta^2h_kg_k.
\end{equation}
Let $h=\sum_kp_kh_k$ and $g=\sum_kp_kg_k$. Two centralized steps yield
$\theta_{\mathrm{C},2}=\theta-2\eta g+\eta^2hg$. Hence
\begin{align}
    \theta_{\mathrm{FA},2}-\theta_{\mathrm{C},2}
    &=
    \eta^2
    \left(\sum_kp_kh_kg_k-hg\right)\\
    &=
    \eta^2\operatorname{Cov}_p(h_k,g_k)\\
    &=
    \eta^2\operatorname{Var}_p(q_k)
    \Delta h\,\Delta g.
    \label{eq:app-quadratic}
\end{align}
The last equality follows from the identity
$\operatorname{Cov}(a+bX,c+dX)=bd\operatorname{Var}(X)$.
Thus, nonzero global drift occurs exactly when client proportions vary and the bins differ in both curvature and gradient. The drift direction depends on
$\operatorname{sign}(\Delta h\Delta g)$, while its squared magnitude increases with
$\operatorname{Var}_p(q_k)^2$. This example also proves why gradient heterogeneity alone is insufficient: if $\Delta h=0$, second-order FedAvg drift vanishes even when the client gradients differ.

\section{Finite-Sample and Without-Replacement Effects}
\label{app:finite}

Let $\widehat q_k$ be the realized client-bin profile after finite integer allocation and minimum-size repair, and let $q_k$ be its population target. Define
\begin{equation}
    \varepsilon_{\mathrm{part}}
    =
    \left(
    \sum_kp_k\|\widehat q_k-q_k\|_2^2
    \right)^{1/2}.
\end{equation}

\begin{proposition}[Finite-partition stability]
\label{prop:app-finite}
If $\widehat{\mathcal{H}}_q$ is computed from
$\widehat q_k$ with the same weights, then
\begin{equation}
    \left|
    \sqrt{\widehat{\mathcal{H}}_q}
    -
    \sqrt{\mathcal{H}_q}
    \right|
    \leq
    \varepsilon_{\mathrm{part}}.
    \label{eq:app-finite-Hq}
\end{equation}
If $\|G(\theta)\|_{\mathrm{op}}\leq M_g$, then, apart from within-bin empirical-gradient noise,
\begin{equation}
    \left|
    \sqrt{\widehat{\mathcal{H}}_g}
    -
    \sqrt{\mathcal{H}_g}
    \right|
    \leq
    M_g\varepsilon_{\mathrm{part}}.
    \label{eq:app-finite-Hg}
\end{equation}
\end{proposition}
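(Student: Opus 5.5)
The plan is to reuse the centering-operator argument from Lemma~\ref{lem:app-normalization} and the proof of Theorem~\ref{thm:app-local}, now with the weights $p_k$. Stack client vectors as $x=(x_1,\ldots,x_K)$ and equip them with the weighted norm $\|x\|_p^2=\sum_kp_k\|x_k\|_2^2$. Define the weighted centering $\mathsf{C}(x)_k=x_k-\sum_jp_jx_j$. This map is linear and idempotent, and it is self-adjoint in the inner product $\langle x,y\rangle_p=\sum_kp_kx_k^\top y_k$. It is therefore an orthogonal projection, hence nonexpansive. By construction $\mathcal{H}_q=\|\mathsf{C}(q)\|_p^2$ and $\widehat{\mathcal{H}}_q=\|\mathsf{C}(\widehat q)\|_p^2$. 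Here $\bar q$ and its hatted counterpart are both formed with the same weights $p_k$, which is what the hypothesis ``with the same weights'' guarantees.

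For \eqref{eq:app-finite-Hq}, I would use linearity of $\mathsf{C}$, the reverse triangle inequality and nonexpansiveness:
\[
\bigl|\|\mathsf{C}(\widehat q)\|_p-\|\mathsf{C}(q)\|_p\bigr|\leq\|\mathsf{C}(\widehat q-q)\|_p\leq\|\widehat q-q\|_p=\varepsilon_{\mathrm{part}}.
\]

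For \eqref{eq:app-finite-Hg}, I would interpret ``apart from within-bin empirical-gradient noise'' as follows: the realized client gradients are $\widehat g_k=G(\theta)\widehat q_k$, with the same bin-conditioned gradient matrix $G$ as in the population model. By \eqref{eq:app-gradient-identity}, $\widehat g_k-\widehat g=G(\widehat q_k-\bar{\widehat q})$, so the stacked centered gradients equal $(I_K\otimes G)\,\mathsf{C}(\widehat q)$, and likewise for the population quantities. Applying the reverse triangle inequality to these stacked vectors gives
\[
\bigl|\sqrt{\widehat{\mathcal{H}}_g}-\sqrt{\mathcal{H}_g}\bigr|\leq\|(I\otimes G)\mathsf{C}(\widehat q-q)\|_p\leq M_g\|\mathsf{C}(\widehat q-q)\|_p\leq M_g\varepsilon_{\mathrm{part}}.
\]
The middle step holds because $\|Gv\|_2\leq\|G\|_{\mathrm{op}}\|v\|_2$ blockwise, and the weights $p_k$ pass through unchanged.

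The main subtlety lies in the weights. After repair, the realized sample sizes, and hence the realized $p_k$, may differ from the population ones. I will read the statement as fixing a common weight vector for both quantities, as it explicitly says, so that a single projection $\mathsf{C}$ applies to both. I will also note that the gradient claim deliberately ignores the discrepancy between empirical and population bin gradients.
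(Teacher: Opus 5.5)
Your proposal is correct and follows the paper's own proof essentially step for step. Both write $\sqrt{\mathcal{H}_q}$ and $\sqrt{\widehat{\mathcal{H}}_q}$ as $\|\mathsf{C}(\cdot)\|_p$ norms of centered stacked profiles, bound their difference by $\varepsilon_{\mathrm{part}}$ through the reverse triangle inequality and nonexpansiveness of weighted centering, and then apply $\|Gv\|_2\leq M_g\|v\|_2$ blockwise for the gradient bound. Your explicit check that the $p$-weighted centering is self-adjoint in $\langle\cdot,\cdot\rangle_p$ is a welcome addition: the paper only asserts this, and its Lemma~\ref{lem:app-normalization} defines centering with uniform weights.
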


\begin{proof}
Both heterogeneity square roots are norms of centered stacked profiles. The reverse triangle inequality and nonexpansiveness of weighted centering give
\begin{align}
    \left|
    \|\mathsf{C}(\widehat q)\|_p
    -
    \|\mathsf{C}(q)\|_p
    \right|
    &\leq
    \|\mathsf{C}(\widehat q-q)\|_p\\
    &\leq
    \|\widehat q-q\|_p
    =
    \varepsilon_{\mathrm{part}}.
\end{align}
This proves \eqref{eq:app-finite-Hq}. Apply
$\|Gv\|_2\leq M_g\|v\|_2$ to each centered perturbation to obtain \eqref{eq:app-finite-Hg}.
\end{proof}

Allocation without replacement is appropriate for a partition benchmark because each training example belongs to exactly one client. This procedure naturally produces unequal values of $n_k$, so FedAvg must use
$p_k=n_k/\sum_jn_j$. This makes
$\sum_kp_kF_k$ equal to the global empirical objective over the union of client datasets. Uniform client averaging would optimize a different objective whenever client sizes differ.

Integer rounding and a minimum size, for example $n_k\geq16$, necessarily alter extreme draws. This effect is strongest for very small values of $\alpha$. Therefore, nominal $\alpha$ is a control parameter, whereas the realized client sizes,
$\widehat{\mathcal{H}}_q$, and $\widehat{\mathcal{H}}_W$ quantify the effective non-IID severity and should be retained in the experimental record.

\section{Falsifiable Theoretical Predictions}
\label{app:predictions}

The analysis supports the following testable statements.

\begin{enumerate}
    \item The raw solution-bin allocation obeys
    \begin{equation}
        \mathbb{E}[\mathcal{H}_u]
        =
        \frac{K-1}{K\alpha+1}\sum_b\pi_b^2.
    \end{equation}
    This is the unconditional link between $\alpha$ and partition heterogeneity.

    \item A valid solution-geometry-aware ground cost gives
    $\mathcal{H}_W=\Theta(\mathcal{H}_q)$ through
    \eqref{eq:app-HW-Hq}.

    \item Gradient-identifiable bins give
    $\mathcal{H}_g=\Theta(\mathcal{H}_q)$ through
    \eqref{eq:app-gradient-sandwich}.

    \item For a sufficiently small learning rate,
    \begin{equation}
        \mathcal{D}_{\mathrm{loc}}
        =
        \eta^2E^2\mathcal{H}_g+O(\eta^3E^3),
    \end{equation}
    plus the explicit stochastic term in \eqref{eq:app-sgd}.

    \item The deterministic FedAvg--centralized trajectory difference begins at second order. With $c_E=E(E-1)/2$,
    \begin{equation}
        \Delta_\theta
        =
        \eta^2c_E
        \sum_kp_k(H_k-H)(g_k-g)
        +
        O(\eta^3E^3).
    \end{equation}
    Thus, one full-gradient local step has zero aggregated trajectory drift under full participation, although client gradients may be heterogeneous.

    \item Monotonic growth of global trajectory drift as $\alpha$ decreases additionally requires curvature--gradient alignment and a sufficiently small partition-repair error. This growth is not a universal consequence of the Dirichlet law.

    \item The sign of
    $\mathcal{R}(\theta_{\mathrm{FA}})-\mathcal{R}(\theta_{\mathrm{C}})$
    is unrestricted away from a stationary, locally strongly convex centralized comparator. Parameter drift, update dispersion, and gradient heterogeneity are therefore the primary theoretical observables; signed test-error drift is a downstream empirical quantity.
\end{enumerate}

These predictions motivate ablations over $\alpha$, the number of local steps $E$, the learning rate $\eta$, the number of clients $K$, and the PDE regime. Reporting the realized
$\widehat{\mathcal{H}}_q$ in addition to nominal $\alpha$ distinguishes the intended solution-space heterogeneity from finite-sample and minimum-size effects.
\end{appendices}

\bibliography{references}

\end{document}